\documentclass{article}

\usepackage[preprint, nonatbib]{neurips_2026}

\usepackage[utf8]{inputenc} 
\usepackage[T1]{fontenc}    
\usepackage{hyperref}       
\usepackage{url}            
\usepackage{booktabs}       
\usepackage{amsfonts}       
\usepackage{nicefrac}       
\usepackage{microtype}      
\usepackage{xcolor}         

\usepackage{amssymb}
\usepackage{amsthm}
\usepackage{graphicx}
\usepackage{dsfont}
\usepackage{amsmath}
\usepackage{algorithm}
\usepackage{algpseudocode}
\usepackage{setspace}
\usepackage{multirow}
\usepackage{wrapfig}
\usepackage{xspace}
\usepackage{etoolbox}

\usepackage{enumitem} 
\usepackage[most]{tcolorbox}

\DeclareMathOperator*{\argmax}{arg\,max}
\newcommand\indep{\protect\mathpalette{\protect\independenT}{\perp}}
\def\independenT#1#2{\mathrel{\rlap{$#1#2$}\mkern2mu{#1#2}}}

\usepackage[textsize=tiny]{todonotes} 

\usepackage{pifont}

\newtheorem{theorem}{Theorem}

\newtheorem{remark}{Remark}
\newtheorem{definition}{Definition}
\newtheorem{assumption}{Assumption}

\usepackage[square,numbers]{natbib}
\definecolor{darkgreen}{RGB}{0,100,0}

\title{Amortizing Causal Sensitivity Analysis via Prior Data-Fitted Networks}

\author{%
\begin{tabular}{c}
\textbf{Emil Javurek}\textsuperscript{1,2,}\thanks{Corresponding author: \texttt{emil.javurek@lmu.de}} ,
\textbf{Dennis Frauen}\textsuperscript{1,2},
\textbf{Marie Brockschmidt}\textsuperscript{1,2},
\textbf{Jonas Schweisthal}\textsuperscript{1,2},
\\[-0.1em]
\textbf{Stefan Feuerriegel}\textsuperscript{1,2}
\\[0.7em]
{\normalfont\mdseries
\textsuperscript{1}LMU Munich
\qquad
\textsuperscript{2}Munich Center for Machine Learning (MCML)}
\end{tabular}
}

\begin{document}

\maketitle

\begin{abstract}

Causal sensitivity analysis aims to provide bounds for causal effect estimates in the presence of unobserved confounding. However, existing methods for causal sensitivity analysis are per-instance procedures, meaning that changes to the dataset, causal query, sensitivity level, or treatment require new computation. Here, we instead present an in-context learning approach. Specifically, we propose an amortized approach to causal sensitivity analysis based on prior-data fitted networks. A key challenge is that the sensitivity bounds are not directly available when sampling training data. To address this, we develop a general prior-data construction that is applicable across the class of generalized treatment sensitivity models. Our construction involves a Lagrangian scalarization of the objective to generate training labels for the bounds through a tradeoff between causal effect min/max-imization and sensitivity model violation, which avoids model-specific analytical derivations. We further show that, under standard convexity and linearity conditions, our objective recovers the full Pareto frontier of solutions. Empirically, we demonstrate our amortized approach across various datasets, causal queries, and sensitivity levels, where our approach achieves a test-time computation that is orders of magnitude faster than per-instance methods. To the best of our knowledge, ours is the first foundation model for in-context learning for causal sensitivity analysis.

\end{abstract}

\section{Introduction}
\label{sec:introduction}
\vspace{-0.2cm}



Causal inference from observational data is widely used in fields such as medicine \citep{feuerriegelCausalMachineLearning2024,skapetzeMonitoringChangesVitamin2025}, public policy \citep{kuzmanovicCausalMachineLearning2024}, and the social sciences \citep{barRoleSocialMedia2025}, but relies on untestable assumptions about the presence and strength of unobserved confounding \citep{pearlCausalityModelsReasoning2013}. \textbf{\emph{Causal sensitivity analysis}} addresses this limitation by replacing point estimates with bounds on causal effects in the presence of unobserved confounding \citep{manskiNonparametricBoundsTreatment1989}. In practice, such bounds can be sufficient for decision-making---for example, to assess when conclusions remain unchanged across a range of plausible confounding strengths.


Formally, causal sensitivity analysis computes lower and upper bounds (i.e., $\theta^-$ and $\theta^+$) on a causal estimand as functions of a sensitivity level $\Gamma$. A wide range of methods exist for estimating these bounds \citep{dornSharpSensitivityAnalysis2023,
frauenSharpBoundsGeneralized2023,
frauenNEURALFRAMEWORKGENERALIZED2024a,
jessonQuantifyingIgnoranceIndividualLevel2021,
jessonScalableSensitivityUncertainty2022,
jinSensitivityAnalysisIndividual2023,
jinSensitivityAnalysisSensitivity2026,
kallusIntervalEstimationIndividualLevel2019,
manskiNonparametricBoundsTreatment1989,
marmarelisPartialIdentificationDose2023,
marmarelisEnsembledPredictionIntervals2024,
oprescuBLearnerQuasiOracleBounds2023,
rosenbaumSensitivityAnalysisCertain1987,
tanDistributionalApproachCausal2006,
yinConformalSensitivityAnalysis2024,
zhaoSensitivityAnalysisInverse2019}, but these methods are inherently \textit{per-instance} computations. This means that each change to the dataset, causal query, treatment arm, or sensitivity level $\Gamma$ requires a new computation. The need to recompute the bounds \textit{per-instance} often limits practical use. As a result, sensitivity analysis is frequently applied as a final check, rather than as a systematic tool to comprehensively understand how the conclusions vary across assumptions.


In this paper, we instead propose an \emph{in-context learning} approach to causal sensitivity analysis. Our central idea is to follow an amortized approach and replace the per-instance computation with a learned predictor that, given a dataset, causal query, and sensitivity level, directly returns the corresponding sensitivity bounds. In principle, a natural framework for such amortization is through prior-data fitted networks \citep{Muller.12202021}; however, this is \textit{non-trivial}. In point-identified settings, prior-data construction is straightforward: one samples from a data-generating process, which directly provides the causal effect that can be used as a label. In contrast, causal sensitivity analysis requires lower and upper bounds under a sensitivity model. Importantly, these bounds are not directly available from the data-generating process and must themselves be computed. As a result, the key challenge is therefore \textit{how to construct such prior-data labels in a tractable manner for a broad range of sensitivity models.}

\begin{figure}[t]
    \vspace{-1.0cm}
    \centering
    \includegraphics[width=\linewidth]{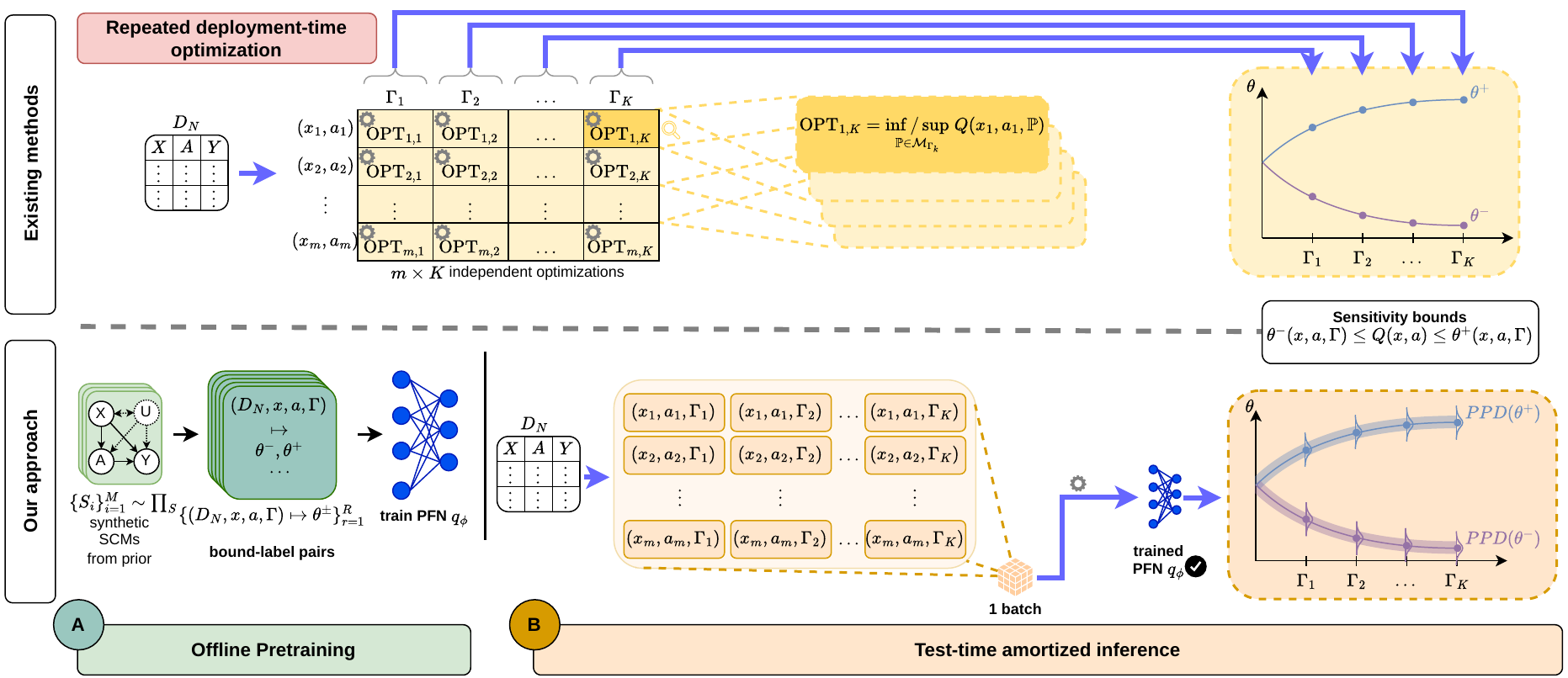}
    \vspace{-0.5cm}
    \caption{
        \textbf{Per-instance optimization vs. amortization.} \footnotesize{Existing methods (top row) perform \emph{per-instance} optimization: for each input query $(x_i,a_i)$ and each level of sensitivity $\Gamma_k$, a new optimization must be instantiated. The sensitivity bound curves (right) are constructed across $m\times K$ optimizations. Our approach (bottom row) amortizes: Expensive pretraining is done once \textit{offline} (step A). Once trained, the PFN processes all input queries in a single \textit{batched} forward pass at trivial cost (step B).
    } 
    }
    \label{fig:1-Intro}
    \vspace{-0.6cm}
\end{figure}


In this paper, we introduce a general prior-data construction for obtaining labels wrt. sensitivity bounds. Our construction is broadly applicable and can be used together with the class of generalized treatment sensitivity models (GTSMs) \citep{frauenNEURALFRAMEWORKGENERALIZED2024a}, which includes the marginal sensitivity model (MSM) \citep{tanDistributionalApproachCausal2006}, $f$-sensitivity models  \citep{jinSensitivityAnalysisSensitivity2026}, and Rosenbaum’s sensitivity model \citep{rosenbaumSensitivityAnalysisCertain1987}. To derive our construction, we formulate the problem of generating sensitivity bounds as a multi-criteria optimization problem over latent distribution shifts to tradeoff between causal effect min/max-imization and sensitivity model violation. By varying the tradeoff, we trace the Pareto frontier of bound values along different sensitivity levels and thereby yield the lower and upper bounds needed for prior-data training. Our approach does not require closed-form analytical solutions and thus applies to all sensitivity models from the GTSM class. We use the construction to train a prior-data fitted foundation model that allows us to predict sensitivity bounds based on new datasets, causal queries, and sensitivity levels.

Finally, we use the resulting labels to train a prior-data fitted \emph{foundation model}. After training, inference reduces to a single forward pass: given a dataset, causal query, and sensitivity level, the foundation model simply predicts the lower and upper sensitivity bounds directly from data without recomputing them from scratch. This enables fast evaluation across many causal queries and assumptions. Because our prior-data construction is not tied to a specific sensitivity model, the approach applies broadly to the class of GTSMs. Finally, the foundation model outputs full posterior predictive distributions for the lower and upper bounds, thereby providing uncertainty-aware estimates rather than only point predictions. To the best of our knowledge, this is the first foundation model for causal sensitivity analysis.


Our \textbf{contributions} are three-fold:\footnote{Code available at: \url{https://github.com/EmilJavurek/Amortizing-Causal-Sensitivity-Analysis-via-PFNs}.}
\vspace{-0.2cm}
\begin{itemize}[leftmargin=6mm, labelsep=0.5em]
    \item[(1)] \textbf{Prior-data label construction.}
    We introduce a general construction scheme for generating training labels with sensitivity bounds. The construction involves a Lagrangian scalarization of the objective, which we show theoretically to recover the full Pareto frontier of solutions. 

    \item[(2)] \textbf{A foundation model for causal sensitivity analysis.}
    We train a prior-data fitted foundation model to offer a new in-context learning approach for causal sensitivity analysis. Our foundation model directly predicts the sensitivity bounds given a dataset, causal query, and sensitivity level.

    \item[(3)] \textbf{Empirical demonstration.}
    We empirically demonstrate that our foundation model can approximate sensitivity bounds across a broad range of settings within a single forward pass. 
\end{itemize}

\section{Related Work}
\label{sec:related-work}
\vspace{-0.2cm}




\textbf{Causal sensitivity analysis:} Causal sensitivity analysis aims at partial identification of causal queries under a sensitivity model. Several sensitivity models have been developed for this purpose, such as the marginal sensitivity model (MSM) \citep{tanDistributionalApproachCausal2006}, the $f$-sensitivity models \citep{jinSensitivityAnalysisSensitivity2026}, and Rosenbaum's sensitivity model \citep{rosenbaumSensitivityAnalysisCertain1987}. Recently, Frauen et al. \citep{frauenSharpBoundsGeneralized2023,frauenNEURALFRAMEWORKGENERALIZED2024a} proposed a unified approach in the form of the generalized treatment sensitivity model (GTSM), which subsumes all three aforementioned sensitivity models.

Existing approaches to estimating sensitivity bounds fall into two broad categories. On the one hand, closed-form sharp bounds are available, but only for a small number of highly structured settings (e.g., the MSM) \citep{dornSharpSensitivityAnalysis2023, frauenSharpBoundsGeneralized2023}. However, this applies only to the MSM and not, for example, the broader set of sensitivity models in the GTSM. On the other hand, GTSMs typically require a two-stage neural density estimation procedure \citep{frauenNEURALFRAMEWORKGENERALIZED2024a}, and are thus computationally expensive. Importantly, all machine learning approaches to estimating sensitivity bounds operate under a \textit{per-instance} approach, meaning that changes to the dataset, causal query, sensitivity level, or treatment require new computations. \textit{As a result, causal sensitivity analysis is computationally complex at deployment, and an approach for in-context learning is still missing.}

\textbf{Prior data-fitted networks:} Prior-data fitted networks (PFNs) \citep{Muller.12202021} are foundation models that amortize Bayesian posterior predictive inference via in-context learning. Hence,  PFNs are trained on synthetic data sampled from a prespecified prior. TabPFN \citep{grinsztajnTabPFN25AdvancingState2026, Hollmann.752022, hollmannAccuratePredictionsSmall2025} demonstrated this paradigm at scale by combining transformers with a Bayesian neural network prior over structural causal models, thereby achieving state-of-the-art performance on tabular prediction. Subsequent work has extended PFNs to other modalities (e.g., such as time-series forecasting \citep{hooTablesTimeExtending2025}) and provided theoretical insights into the in-context learning behavior theoretically \citep{melnychukFrequentistConsistencyPriorData2026, naglerStatisticalFoundationsPriorData2023}. More broadly, PFNs offer a general approach for amortized inference across tasks that would traditionally be solved on a \textit{per-instance} basis. 


\textbf{Foundation models for causal inference:} Only recently, first works have applied PFN-style amortization to causal inference \citep{Balazadeh.692025, bynumBlackBoxCausal2025, Ma.6122025, Robertson.662025}. The central methodological challenge in this line of work is the design of priors over data-generating processes that ensure that the learned model performs valid \textit{causal} inference at deployment, rather than merely interpolating observational patterns. For example, \citet{Ma.6122025} propose a framework for constructing priors based on causal structural models. However, all of the existing approaches focus on causal inference using \textit{point identification}, where all confounders are \emph{observed}. 

In contrast, causal sensitivity analysis focuses on settings where some confounders are \textit{unobserved} and thus aims at \textit{partial identification} by bounding the causal estimand. This makes the problem fundamentally harder: unlike point-identified causal effects, sensitivity bounds are not available as direct functionals of a sampled structural causal model and must themselves be computed via non-trivial optimization procedures. This also makes the development of foundation models for this task substantially more challenging. \textit{To the best of our knowledge, no prior work has developed a foundation model for causal sensitivity analysis; we provide the first principled approach for this.}

\vspace{-0.2cm}
\section{Mathematical Background}
\label{sec:mathematical-background}
\vspace{-0.2cm}


\begin{wrapfigure}{r}{0.25\linewidth}
    \vspace{-0.8cm}
    \centering
    \includegraphics[width=\linewidth]{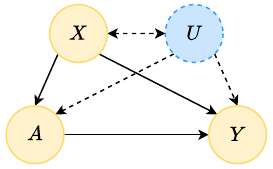}
    \vspace{-0.6cm}
    \caption{
    Causal graph. \footnotesize{Observed variables are colored orange and unobserved blue. We allow for arbitrary dependence between $X$ and $U$.}
    }
    \label{fig:2-causal-graph}
    \vspace{-0.3cm}
\end{wrapfigure}

\textbf{Notation:}
We write random variables in uppercase (e.g., $X$) and their realizations in lowercase (e.g., $x$). We write $\mathbb{P}$ for a probability distribution, with $\mathbb{P}(x)$ denoting the probability mass/density function if $X$ is discrete/continuous. Conditional probability mass/density functions are written as $\mathbb{P}(Y = y \mid X = x)$, or shorthand $\mathbb{P}(y \mid x)$, while a conditional distribution $Y|X=x$ is written as $\mathbb{P}(Y \mid x)$. Expectations are written as $\mathbb{E}_{\mathbb{P}}[\cdot]$, with the subscript omitted when the underlying distribution is clear from the context. We denote by $f_\sharp \mathbb{P}$ the push-forward of $\mathbb{P}$ through $f : \mathcal{A} \mapsto \mathcal{B}$, defined by $(f_\sharp \mathbb{P})(B) = \mathbb{P}({a : f(a) \in B})$ for measurable sets $B \in \mathcal{B}$.

\vspace{-0.2cm}
\subsection{Problem setup}
\label{sec:problem-setup}
\vspace{-0.2cm}

\textbf{Setting:}
We consider a standard causal inference setting with backdoor adjustment \citep{dornSharpSensitivityAnalysis2023}, with observed covariates $X \in \mathcal{X} \subseteq \mathbb{R}^{d_x}$, an unobserved confounder $U \in \mathcal{U} \subseteq \mathbb{R}$, a binary treatment $A \in \mathcal{A} = \{0,1\}$, and an outcome $Y \in \mathcal{Y} \subseteq \mathbb{R}$ (see Fig.~\ref{fig:2-causal-graph}). The data-generating process is formalized as a structural causal model (SCM) $\mathcal{S} = (X, U, A, Y, f, \mathbb{P}_U, \mathbb{P}_\varepsilon)$, where $f = (f_X, f_A, f_Y)$ denotes the structural assignments and $\mathbb{P}_U, \mathbb{P}_\varepsilon$ the distributions over unobserved confounding $U$ and exogenous noise variables $\varepsilon = (\varepsilon_X, \varepsilon_A, \varepsilon_Y)$ \citep{pearlCausalityModelsReasoning2013}. The SCM induces an observational distribution $\mathbb{P}^{\mathcal{S}}_{\mathrm{obs}}$ over $(X,A,Y)$, from which we observe an i.i.d. dataset $D_n = (x_i,a_i,y_i)_{i=1}^n \sim \mathbb{P}^{\mathcal{S}}_{\mathrm{obs}}$. The interventional distributions $\mathbb{P}^{\mathcal{S}}_{\mathrm{int}}$ are obtained by replacing the treatment assignment mechanism with the intervention $\mathrm{do}(A=a)$ \citep{pearlCausalityModelsReasoning2013}. We denote the potential outcome under this intervention by $Y(a)$ \citep{rubinEstimatingCausalEffects1974}. Throughout, the true SCM $\mathcal{S}^\star$ is \emph{unknown}; only samples from $\mathbb{P}^{\mathcal{S}^\star}_{\mathrm{obs}}$ are observed.


\textbf{Causal query:} For a data-generating process $\mathcal{S}$, we are interested in estimating a causal query $Q$ of the form $\mathcal{Q}\left(\mathbb{P}^{\mathcal{S}}_{\mathrm{int}}(Y(a) \mid x)\right)$, where $\mathcal{Q}$ maps the conditional potential outcomes distribution to $\mathcal{Y}$. In this work, we focus on the conditional average potential outcome (CAPO), i.e., $Q(x,a,\mathbb{P}) = \mathbb{E}_{\mathbb{P}}[Y(a) \mid x]$, 
as a primitive causal estimand. Other common estimands, such as the average treatment effect (ATE) and conditional average treatment effects (CATE), can be obtained by simple (post-inference) transformations.\footnote{
For the ATE, one marginalizes over covariates the difference $\mathbb{E}_{X}[Q(x,1,\mathbb{P}) - Q(x,0,\mathbb{P})]$. Similarly, the CATE is given the difference $Q(x,1,\mathbb{P}) - Q(x,0,\mathbb{P})$.
}.

\begin{assumption}\label{assumption1}
We work under the standard causal assumption \citep{dornSharpSensitivityAnalysis2023}: (i) consistency: \mbox{$A=a \Rightarrow Y(a)=Y, \ \forall a$}, (ii) positivity: \mbox{$\mathbb{P}(a|x) > 0, \ \forall (x,a)$}, and (iii) latent unconfoundedness: \mbox{$Y(a) \indep A \mid X, U, \ \forall a$}. 
\end{assumption}

\textbf{Partial identification:} Because $U$ is unobserved, the causal query $Q(x,a,\mathbb{P})$ is generally \textbf{not} point-identified from the observed data distribution $\mathbb{P}^{\mathcal{S}^\star}_{\mathrm{obs}}$. In particular, there may exist different data-generating processes $\mathcal{S} \neq \mathcal{S}^\star$ that induce the \emph{same} observational distribution $\mathbb{P}^{\mathcal{S}}_{\mathrm{obs}} = \mathbb{P}^{\mathcal{S}^\star}_{\mathrm{obs}}$ over $(X,A,Y)$ but imply different causal effects $Q(\cdot, \mathbb{P}) \neq Q(\cdot, \mathbb{P}^\star)$. As a result, the causal query is only partially identified in the sense that we can only infer a set of causal effects compatible with the observed data distribution and the underlying assumptions \citep{manskiNonparametricBoundsTreatment1989}.

\vspace{-0.2cm}
\subsection{Causal sensitivity analysis}
\label{sec:sensitivity-analysis}

Causal sensitivity analysis introduces a structured way to characterize the set of data-generating processes compatible with the observed data by parameterizing the strength of unobserved confounding through a sensitivity model. 

\begin{definition}
    A sensitivity model $\mathcal{M}_{\Gamma}$ is a family of distributions $\mathbb{P}$ over $(X,U,A,Y)$ parameterized by a sensitivity level $\Gamma \in  \mathbb{R}_{\geq \Gamma_{\mathrm{min}} \geq 0}$, such that $\int_{\mathcal{U}}\mathbb{P}(x,u,a,y)\mathrm{d}u \equiv \mathbb{P}_{\mathrm{obs}} = \mathbb{P}_{\mathrm{obs}}^\star$ for all $\mathbb{P} \in \mathcal{M}_{\Gamma}$.
\end{definition}

In this work, we focus on the so-called \emph{generalized treatment sensitivity model (GTSM)}, which is a broad class of sensitivity models that constrain the latent distribution shift induced by treatment intervention \citep{frauenNEURALFRAMEWORKGENERALIZED2024a}. Under the GTSM, the admissible full distributions satisfy constraints of the form
\begin{align}
    \Delta_{x,a}\left(
        \mathbb{P}(U \mid x),
        \mathbb{P}(U \mid x,a)
    \right)
    \leq \Gamma, \ \forall (x,a)
\end{align}
where $\Delta_{x,a}$ is a divergence functional measuring the discrepancy between the latent distribution under intervention (i.e., $\mathbb{P}(U \mid x)$)\footnote{
We recognize that $\mathbb{P}(U \mid x, \mathrm{do}(A=a)) = \mathbb{P}(U \mid x)$ since $A \rightarrow U$ is the anti-causal direction. Unobserved confounding $U$ \textit{causes} treatment $A$ (rather than the other way) and so cannot be influenced by an intervention on the treatment. Rather, the intervention removes the dependence between $A$ and $U$.
} and the latent distribution observed under treatment assignment (i.e., $\mathbb{P}(U \mid x,a)$). The GTSM subsumes several widely used sensitivity models, including the marginal sensitivity models \citep{tanDistributionalApproachCausal2006}, the $f$-sensitivity models \citep{jinSensitivityAnalysisSensitivity2026}, and Rosenbaum's sensitivity model \citep{rosenbaumSensitivityAnalysisCertain1987}, depending on the choice of divergence $\Delta_{x,a}$ (see Appendix~\ref{subsec:gtsm-divergences} for details). 

\begin{tcolorbox}[
    colback=white, 
    colframe=black, 
    sharp corners, 
    boxrule=0.8pt, 
    colbacktitle=black, 
    coltitle=white,
    fonttitle=\bfseries
]
\textbf{Task:} For a given sensitivity model $\mathcal{M}_{\Gamma}$, the aim of causal sensitivity analysis is to obtain the sensitivity bounds for the causal query:
\begin{align}
    \theta^-(x,a,\Gamma)
    =
    \inf_{\mathbb{P} \in \mathcal{M}_\Gamma}
    Q(x,a,\mathbb{P}),
    \qquad
    \theta^+(x,a,\Gamma)
    =
    \sup_{\mathbb{P} \in \mathcal{M}_\Gamma}
    Q(x,a,\mathbb{P}).
\end{align}
\end{tcolorbox}
By definition, the interval $[\theta^-(x,a,\Gamma), \theta^+(x,a,\Gamma)]$ is the tightest interval of causal query values compatible with the observed distribution $\mathbb{P}_{\mathrm{obs}}^\star$ and the sensitivity level $\Gamma$. In other words, 
\begin{align}
    Q^\star \in [\theta^-(x,a,\Gamma), \theta^+(x,a,\Gamma)]
\end{align}
is the maximum information about $Q^\star$ that we can extract from $D_n$ as $n \rightarrow \infty$, given $\mathbb{P}^\star \in  \mathcal{M}_{\Gamma}$. The parameter $\Gamma$ controls the width of the interval; when no unobserved confounding is permitted, the causal query is point identified. 

For ease of reading, we omit the sensitivity model $\mathcal{M}_{\Gamma}$ from notation during the rest of the paper, referring to an arbitrary sensitivity model from GTSM, unless stated otherwise.

\vspace{-0.2cm}
\subsection{Background on PFNs}
\label{sec:background-on-pfns}


Prior-data fitted networks (PFNs) are neural foundation models trained to amortize posterior predictive inference across data-generating processes \citep{Muller.12202021}. Given a dataset $D_N \sim \mathbb{P}$, a trained PFN predicts the target distribution $\mathbb{P}(Y \mid X)$ without task-specific gradient updates or retraining (i.e., in-context). Training proceeds by sampling data-generating processes over a prior $\Pi$ and minimizing the negative log-likelihood
\begin{align}
\vspace{-0.5cm}
    \mathcal{L}(\theta)
    =
    \mathbb{E}_{\mathbb{P} \sim \Pi_{\mathcal{S}}}
    \mathbb{E}_{N \sim \Pi_N}
    \mathbb{E}_{D_N \sim \mathbb{P}^N}
    \mathbb{E}_{(X,Y) \sim \mathbb{P}}
    \left[
        -\log q_{\theta}(Y \mid X, D_N)
    \right] .
\end{align}
This yields the predictor $q_{\theta}(Y \mid X, D_N)$ approximating the posterior predictive distribution (PPD) 
\begin{equation}
    \Pi(Y \mid X, D_N)
    =
    \int \mathbb{P}(Y \mid X)\Pi(\mathbb{P} \mid D_N)\,\mathrm{d}\mathbb{P}.
\end{equation}
Because the prior-data distribution can be entirely synthetic, PFNs enable foundation-model-style pretraining for broad classes of tabular prediction problems.

\vspace{-0.2cm}
\section{Amortized Causal Sensitivity Analysis with PFNs}
\label{sec:amortized-sensitivity-analysis}


\textbf{Overview:} We develop a PFN-based approach for amortized causal sensitivity analysis, with the aim of replacing per-instance optimization with a foundation model that predicts sensitivity bounds in a single forward pass for new datasets, causal queries, and sensitivity levels. We first formalize this amortized prediction task and describe the \textbf{high-level learning setup} in Section~\ref{sec:formalized-task}, and the \textbf{synthetic data generation via SCM priors} in Section~\ref{sec:prior-over-dgps}. The central difficulty, however, is label construction: sensitivity bounds lack closed-form expressions in general, and existing solutions are not built for PFN-scale training label generation. To overcome this bottleneck, Section~\ref{sec:label-construction} introduces our core technical contribution: a general \textbf{prior-label construction procedure} based on Lagrangian scalarization, which allows us to efficiently generate sensitivity bound labels across sampled SCMs for training PFNs.

\vspace{-0.2cm}
\subsection{Task formulation}
\label{sec:formalized-task}
\vspace{-0.1cm}

\textbf{PFN:} To adapt PFNs to causal sensitivity analysis under a fixed sensitivity model $\mathcal{M}$, we train a PFN to learn the mapping
\begin{align}
    D_{N}, x, a, \Gamma \longrightarrow \mathrm{PPD}(\theta^{-}),\mathrm{PPD}(\theta^{+}) ,
\end{align}
where $\mathrm{PPD}(\cdot)$ denotes the posterior predictive distribution induced by the prior over data-generating processes In other words, for a dataset $D_{N}$, at a specific query $(x,a)$, and under unobserved confounding constrained by $\Delta_{x,a} \leq \Gamma$ according to $\mathcal{M}$, we learn the PPD over the lower and upper bounds $\theta^-$ and $\theta^+$ of the causal query $Q(x,a) = \mathbb{E}_{\mathbb{P}}[Y(a) \mid x]$. 

\textbf{Training data:} During training, we sample a large, diverse collection of data-generating processes $\{\mathcal{S}_i\}_{i=1}^{n_i} \sim \Pi_{\mathcal{S}}$, each of which generates a set of input tuple $(D_{N}, x, a, \Gamma)$ and output label $(\theta^{-},\theta^{+})$ supervised training pairs $\{(D_{N}, x, a, \Gamma)_j$, i.e., $(\theta^{-},\theta^{+})_j\}_{j=1}^{n_j}$. By constructing a sufficiently rich prior-data distribution, the PFN trained on $n_i \times n_j$ such pairs is able to amortize sensitivity analysis across datasets, causal queries, and sensitivity levels.\footnote{Throughout, we omit the SCM index $\mathcal{S}$ on all training pairs $\{(D_{N}, x, a, \Gamma),(\theta^{-},\theta^{+})\}$ for ease of exposition.}

An alternative to our formulation is to consider for output a PPD on the causal query $Q$ \emph{directly} and to obtain bounds by taking a credible interval. While technically feasible, it is fundamentally misaligned with the nature of the setting: since the true causal query $Q^\star$ is only partially identified under the sensitivity model, the maximum information about $Q^\star$ that we can extract from the data $D_n$ as $n \rightarrow \infty$ are the bounds $\theta^-,\theta^+$, i.e. $Q^\star \in [\theta^-,\theta^+]$. The distribution of $Q^\star$ \emph{within} this interval \textit{cannot} be learned from the data and is \textit{purely} driven by the prior construction. As such, we focus directly on the identifiable and thus learnable quantities, namely, the lower and upper bounds.

\subsection{Synthetic data generation via SCM priors}
\label{sec:prior-over-dgps}

\textbf{SCM sampling:} We construct the prior-data distribution by sampling SCMs $\mathcal{S}_i \sim \Pi_{\mathcal{S}}$. Each sampled SCM $    \mathcal{S}_i = (X,U,A,Y,f,\mathbb{P}_U,\mathbb{P}_{\varepsilon})
$
induces an observational distribution $\mathbb{P}^{\mathcal{S}}_{\mathrm{obs}}$ over $(X,A,Y)$, from which we sample observational datasets $D_N \sim (\mathbb{P}^{\mathcal{S}}_{\mathrm{obs}})^N$. Following prior-data constructions for other foundation models \citep{Hollmann.752022, Ma.6122025}, we parameterize the structural assignments $f=(f_X,f_A,f_Y)$ using randomly sampled Bayesian neural networks. For example, the outcome is generated by $Y \sim f_{Y,\mathrm{BNN}}(X,U,A,\varepsilon_Y)$, with $f_{Y,\mathrm{BNN}} \sim \Pi_{f_Y}$. This yields a flexible prior over nonlinear covariate distributions, treatment assignment mechanisms, and outcome functions. The distributions for noise and unobserved confounding are sampled from simple parametric families with randomly drawn scales. We provide implementation details in Appendix~\ref{app:implementation-details}.

For effective amortization, we aim to construct a prior $\Pi_\mathcal{S}$ that is as ``broad'' as possible for the setting. In contrast to prior constructions for point-identified causal inference, where the sampled SCMs must be restricted to ensure (point) identifiability of the causal query \citep{Ma.6122025}, our setting is only partially identified due to unobserved confounding. As a result, our prior explicitly allows for SCMs with varying degrees of unobserved confounding, up to the level $\Gamma$ specified by the sensitivity model $\mathcal{M}_\Gamma$.

\textbf{Challenge ($\rightarrow$ label construction).} For each sampled SCM $\mathcal{S}_i$, the prior $\Pi_\mathcal{S}$ provides an observational dataset $D_N$ and query points $(x,a)_j$, with sensitivity levels $\Gamma_j \sim \Pi_\Gamma$. Crucially, however, the corresponding sensitivity bound labels $\theta^\pm(\mathcal{S}_i;x_j,a_j,\Gamma_j)$ are \textbf{not} directly available from the data-generating process $\mathcal{S}_i$; instead, they must be computed. Generating these labels efficiently across many sampled SCMs, queries, and sensitivity levels is addressed in the next section.

\vspace{-0.2cm}
\section{Sensitivity Bound Label Construction via Lagrangian Scalarization}
\label{sec:label-construction} 

To construct training labels, we must compute the sensitivity bounds $\theta^-$ and $\theta^+$ for each sampled SCM and input tuple. We first formulate this as an optimization problem with forward, backward, and bi-objective forms (Section~\ref{sec:directions-of-optimization}). We then introduce a Lagrangian scalarization whose sweep traces the resulting Pareto frontier of bound values (Section~\ref{sec:constructing-the-lagrangian}). Finally, we reduce the optimization over full distributions to a tractable computation over a conditional latent distribution, which yields a scalable label-generation procedure that we use for prior-data training (Section~\ref{sec:instantiated-label-computation}).

\vspace{-0.2cm}
\subsection{Optimization problem for sensitivity bounds}
\label{sec:directions-of-optimization}

\textbf{Forward formulation:} For a sampled SCM $\mathcal{S}_i$ and input $(x_j,a_j,\Gamma_j)$, the desired label is given by the solution of a population-level partial identification problem:
\begin{align}
    \theta^-_j(\mathcal{S}_i;x_j,a_j,\Gamma_j)
    &=
    \inf_{\mathbb{P}
    \in 
    \mathcal{P}(\mathcal{S}_i)
    }
    \mathcal{Q}(\mathbb{P}(Y(a_j)\mid x_j))
    \qquad
    \ \mathrm{s.t.} \
    \Delta_{x_j,a_j}(\mathbb{P}) \leq \Gamma_j    
    ,
    \label{eq:label-forward-bound-lower} \\
    \theta^+_j(\mathcal{S}_i;x_j,a_j,\Gamma_j)
    &=
    \sup_{\mathbb{P}
    \in 
    \mathcal{P}(\mathcal{S}_i)
    }
    \mathcal{Q}(\mathbb{P}(Y(a_j)\mid x_j))
    \qquad
    \ \mathrm{s.t.} \
    \Delta_{x_j,a_j}(\mathbb{P}) \leq \Gamma_j    
    ,
    \label{eq:label-forward-bound-upper}
\end{align}
where $\mathcal{P}(\mathcal{S}_i) = \{\mathbb{P} : \mathbb{P}_{\mathrm{obs}} = \mathbb{P}_{\mathrm{obs}}^{\mathcal{S}_i}\}$ denotes the set of full distributions over $(X,U,A,Y)$ that induce the same observational distribution as $\mathcal{S}_i$. This corresponds to the standard ``forward'' formulation of sensitivity analysis: for a fixed sensitivity level $\Gamma$, one optimizes the causal query subject to observational compatibility and the sensitivity constraint to find the bound.

\textbf{Backward formulation:} The same problem can be approached via a ``backward'' formulation: given a fixed causal query $Q$, minimize $\Gamma$ such that the causal query \emph{becomes the bound} $Q=\theta^\pm$ at the $\Gamma$-level constraint, subject to observational compatibility:
\begin{align}
\Gamma_j^\downarrow(\mathcal{S}_i;x_j,a_j,\theta^{-}_j)
    &= 
    \inf_{
        \mathbb{P} \in \mathcal{P} 
        }
    \Delta_{x_j,a_j}(\mathbb{P})
    \qquad \ \mathrm{s.t.} \
    \mathcal{Q}(\mathbb{P}) \leq \theta_j^{-}
    \ \mathrm{and} \
    \mathbb{P}_{\mathrm{obs}} = \mathbb{P}_{\mathrm{obs}}^{\mathcal{S}_i}
    \\
    \Gamma_j^\uparrow(\mathcal{S}_i;x_j,a_j,\theta^{+}_j)
    &= 
    \inf_{
        \mathbb{P} \in \mathcal{P} 
        }
    \Delta_{x_j,a_j}(\mathbb{P})
    \qquad \ \mathrm{s.t.} \
    \mathcal{Q}(\mathbb{P}) \geq \theta_j^{+}
    \ \mathrm{and} \
    \mathbb{P}_{\mathrm{obs}} = \mathbb{P}_{\mathrm{obs}}^{\mathcal{S}_i} 
\end{align}

\textbf{Bi-objective program:} The forward problem and the backward problem above are two epsilon-constraint reductions of the same underlying bi-objective program
\begin{align}
\max_{\mathbb{P} \in \mathcal{P}(\mathcal{S}_i)} 
\left(\mathcal{Q}(\mathbb{P}(Y(a_j)\mid x_j)), -\Delta_{x_j,a_j}(\mathbb{P})\right).\label{eq:bi-objective}
\end{align}
Maximizing the causal query and minimizing the sensitivity divergence are competing objectives: tighter compatibility with the no-confounding reference $\Delta_{x_j,a_j} \downarrow \Gamma_{\mathrm{min}}$ pins $\mathbb{P}$ near the point-identified value of $\mathcal{Q}$, while permitting larger $\Delta_{x_j,a_j}$ enlarges the feasible set and extends the attainable range of $\mathcal{Q}$. The set of non-dominated solutions of Eq.~\eqref{eq:bi-objective} forms a Pareto frontier in the $(\Gamma, \theta)$-plane, characterized by the maps 
\begin{align}
&\Gamma \mapsto \theta_j^-(\mathcal{S}_i;x_j,a_j,\Gamma) 
\quad \text{(convex, non-increasing)}, \\
&\Gamma \mapsto \theta_j^+(\mathcal{S}_i;x_j,a_j,\Gamma)
\quad \text{(concave, non-decreasing)}.
\end{align}
The forward and backward problems trace these same frontiers from opposite parametrizations and are inverse functions of one another wherever the maps are strictly monotone.

\vspace{-0.2cm}
\subsection{Constructing the Lagrangian}
\label{sec:constructing-the-lagrangian}
\vspace{-0.2cm}

\textbf{Lagrangian formulation:} The two epsilon-constraint reductions both have a common Lagrangian. Weighting the two objectives by a Lagrange multiplier $\lambda > 0$ gives, for the lower and upper bounds respectively,
\begin{align} 
\mathcal{L}_\lambda^{\downarrow}(\mathbb{P};\mathcal{S}_i,x_j,a_j) 
&= -\mathcal{Q}(\mathbb{P}(Y(a_j)\mid x_j)) - \lambda  \Delta_{x_j,a_j}(\mathbb{P}), \label{eq:lagrangian-lower} \\
\mathcal{L}_\lambda^{\uparrow}(\mathbb{P};\mathcal{S}_i,x_j,a_j) 
&= \mathcal{Q}(\mathbb{P}(Y(a_j)\mid x_j)) - \lambda \Delta_{x_j,a_j}(\mathbb{P})
. \label{eq:lagrangian-upper}
\end{align}
For each fixed $\lambda$, the optima $\mathbb{P}^{\star\downarrow}_\lambda$ and $\mathbb{P}^{\star\uparrow}_\lambda$ given by
\begin{align}
\mathbb{P}^{\star\downarrow}_\lambda = 
\argmax_{\mathbb{P} \in \mathcal{P}(\mathcal{S}_i)}
\mathcal{L}_\lambda^{\downarrow}(\mathbb{P};\mathcal{S}_i,x_j,a_j),  \\
\qquad
\mathbb{P}^{\star\uparrow}_\lambda = 
\argmax_{\mathbb{P} \in \mathcal{P}(\mathcal{S}_i)}
\mathcal{L}_\lambda^{\uparrow}(\mathbb{P};\mathcal{S}_i,x_j,a_j)
\end{align}
each yield a single point $(\Gamma^\star, \theta^\star)$ on their respective Pareto frontier:
\begin{align}
\Gamma^{\star\downarrow}(\lambda)
= \Delta_{x_j,a_j}(\mathbb{P}^{\star\downarrow}_\lambda),
&\qquad 
\theta^{\star\downarrow}(\lambda)
= \mathcal{Q}(\mathbb{P}^{\star\downarrow}_\lambda(Y(a_j)\mid x_j)),
\\
\Gamma^{\star\uparrow}(\lambda) 
= \Delta_{x_j,a_j}(\mathbb{P}^{\star\uparrow}_\lambda),
&\qquad
\theta^{\star\uparrow}(\lambda) 
= \mathcal{Q}(\mathbb{P}^{\star\uparrow}_\lambda(Y(a_j)\mid x_j)). 
\end{align}
For both bounds, sweeping $\lambda$ from large to small traces the Pareto frontier from tight (i.e., $\Gamma^\star$ small, bounds tight) to wide (i.e., $\Gamma^\star$ large, bounds wide).

\textbf{Sweeping the frontier:} 
To construct the label, we leverage the Lagrangian formulation together with a warm-start optimization along a sweep over values of $\lambda$. Solving Eq.~\eqref{eq:lagrangian-lower} or Eq.~\eqref{eq:lagrangian-upper} at a fixed $\lambda$ yields a point $(\theta^\star(\lambda), \Gamma^\star(\lambda))$ on the Pareto frontier. Because adjacent $\lambda$ values produce nearby optima, each successive optimization can be initialized with the previous solution and converges in only a small number of additional steps, thereby shrinking computation cost across the sweep. This is a significant advantage over the traditional forward formulation: a grid of forward solves at fixed $\Gamma$ targets has no analogous transferable state and each optimization is forced to cold-start. The following theorem formalizes the connection between the Lagrangian sweep and the Pareto frontier.

\begin{figure}[t]
    \vspace{-.5cm}
    \centering
    \includegraphics[width=0.9\linewidth]{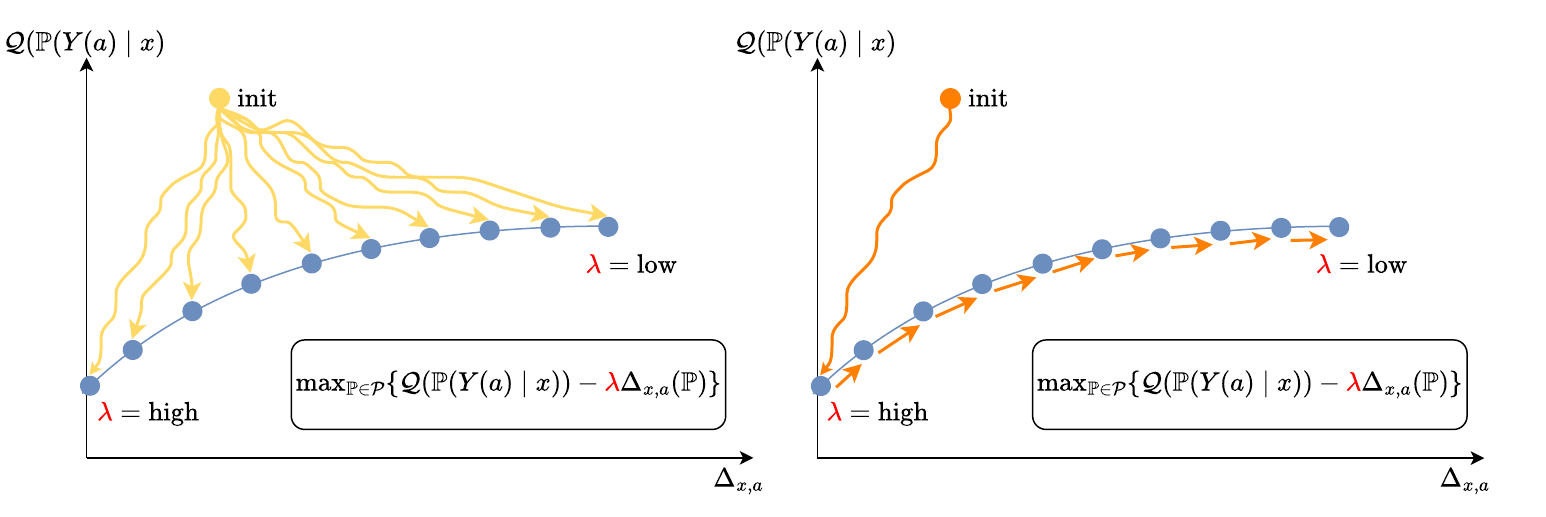}
    \vspace{-0.5cm}
    \caption{
    \textbf{Cold vs warm start.} \scriptsize{Cold-started optimization (left) is re-initialized for each optimization as $\lambda$ varies. Warm-starting (right) finds the Pareto frontier once and then walks across, starting the next optimization where the previous ended.
    }
    }
    \label{fig:3-Pareto-frontier}
    \vspace{-0.3cm}
\end{figure}

\begin{theorem}[Lagrangian sweep recovers the Pareto frontier]
\label{thm:lagrangian-sweep}
Assume (A1) the divergence $\Delta_{x_j,a_j}(\mathbb{P})$ is convex in $\mathbb{P}$ on $\mathcal{P}(\mathcal{S}_i)$, and (A2) the causal query $\mathcal{Q}(\mathbb{P}(Y(a_j) \mid x_j))$ is linear in $\mathbb{P}$ on $\mathcal{P}(\mathcal{S}_i)$. Then, the following is true:

\begin{enumerate}
\vspace{-0.2cm}
    \item[(i)] \textbf{Frontier coverage.} For every $\Gamma \geq \Gamma_{\min}$, there exists $\lambda(\Gamma) \geq 0$ such that any maximizer $\mathbb{P}_{\lambda(\Gamma)}^{\star\uparrow}$ satisfies $\Delta_{x_j,a_j}(\mathbb{P}_{\lambda(\Gamma)}^{\star\uparrow}) = \Gamma$ and $\mathcal{Q}(\mathbb{P}_{\lambda(\Gamma)}^{\star\uparrow}(Y(a_j) \mid x_j)) = \theta_j^+(\mathcal{S}_i; x_j, a_j, \Gamma)$.
\vspace{-0.2cm}
    \item[(ii)] \textbf{Smooth traversal.} The map $\lambda \mapsto \Gamma^{\star\uparrow}(\lambda) := \Delta_{x_j, a_j}(\mathbb{P}_\lambda^{\star\uparrow})$ is monotone non-increasing, and $\lambda \mapsto (\Gamma^{\star\uparrow}(\lambda), \theta^{\star\uparrow}(\lambda))$ traces the upper frontier continuously as $\lambda$ varies.
\vspace{-0.2cm}
\end{enumerate}
The analogous statements hold for the lower bound with
$\mathcal{L}_\lambda^{\downarrow}(\mathbb{P}) = -\mathcal{Q}(\mathbb{P}(Y(a_j) \mid x_j)) - \lambda \Delta_{x_j,a_j}(\mathbb{P})$.
\end{theorem}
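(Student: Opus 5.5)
The plan is the standard convex-duality argument for the $\epsilon$-constraint problem. I take the value function $v(\Gamma) := \theta_j^+(\mathcal{S}_i;x_j,a_j,\Gamma) = \sup\{\mathcal{Q}(\mathbb{P}) : \mathbb{P}\in\mathcal{P}(\mathcal{S}_i),\ \Delta_{x_j,a_j}(\mathbb{P})\le\Gamma\}$ as the central object.

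\textbf{Concavity and monotonicity of $v$.} The feasible set $\mathcal{P}(\mathcal{S}_i)$ is convex, since observational compatibility is a linear marginal constraint. Take $\mathbb{P}_1,\mathbb{P}_2$ feasible at levels $\Gamma_1,\Gamma_2$ and set $\mathbb{P}_t = t\mathbb{P}_1+(1-t)\mathbb{P}_2$. By (A1), $\Delta(\mathbb{P}_t)\le t\Gamma_1+(1-t)\Gamma_2$. By (A2), $\mathcal{Q}(\mathbb{P}_t)=t\mathcal{Q}(\mathbb{P}_1)+(1-t)\mathcal{Q}(\mathbb{P}_2)$. Taking suprema gives concavity of $v$ on $[\Gamma_{\min},\infty)$. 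Monotonicity is immediate because the feasible sets are nested. This confirms the frontier shape asserted in Section~\ref{sec:directions-of-optimization}.

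\textbf{Part (i): existence of $\lambda(\Gamma)$.} Fix $\Gamma>\Gamma_{\min}$. By concavity, $v$ has a supergradient $\lambda(\Gamma)\in\partial v(\Gamma)$, and monotonicity forces $\lambda(\Gamma)\ge0$. The supergradient inequality reads $v(\Gamma')\le v(\Gamma)+\lambda(\Gamma)(\Gamma'-\Gamma)$ for all $\Gamma'$. This is exactly the statement that $\sup_{\mathbb{P}}\,\mathcal{Q}(\mathbb{P})-\lambda\Delta(\mathbb{P}) = v(\Gamma)-\lambda\Gamma$, i.e.\ strong duality, where Slater's condition is supplied by a strictly feasible point at a level $\Gamma_{\min}<\Gamma'<\Gamma$. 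Hence a maximizer of the constrained problem at level $\Gamma$, provided it is attained and the constraint binds, maximizes $\mathcal{L}^\uparrow_{\lambda(\Gamma)}$.

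For the converse direction, that \emph{any} maximizer of $\mathcal{L}^\uparrow_{\lambda}$ lies at divergence exactly $\Gamma$, I use the identity $\mathcal{Q}(\mathbb{P})-\lambda\Delta(\mathbb{P})\le v(\Delta(\mathbb{P}))-\lambda\Delta(\mathbb{P})$. A maximizer $\mathbb{P}$ therefore satisfies two things: $\mathcal{Q}(\mathbb{P})=v(\Delta(\mathbb{P}))$, so it is Pareto-optimal, and $\Delta(\mathbb{P})$ maximizes $v(\cdot)-\lambda\,\cdot$.

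\textbf{Main obstacle.} This is where I expect trouble. On affine pieces of $v$ the maximizing set of $v(\cdot)-\lambda\,\cdot$ is an interval, so the claim ``any maximizer has $\Delta=\Gamma$'' fails unless $v$ is strictly concave at $\Gamma$, or unless $\Gamma$ is read as any point of that interval. I would handle this in one of two ways. The first is to state the result for the maximizer set, which covers the flat segment, and to note that each point of the segment is attained by mixtures of its endpoint maximizers via (A1)--(A2). The second is to add a mild strict-concavity or uniqueness condition. I would try the former first, since it keeps the theorem's generality. I also need attainment of the sup, for instance via weak compactness of $\mathcal{P}(\mathcal{S}_i)$ with $\mathcal{Q}$ continuous and $\Delta$ lower semicontinuous, which I would state as a standing regularity assumption.

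\textbf{Part (ii): monotone, continuous traversal.} Monotonicity follows from the usual exchange argument. Take $\lambda_1<\lambda_2$ with maximizers $\mathbb{P}_1,\mathbb{P}_2$. Adding the two optimality inequalities gives $(\lambda_2-\lambda_1)(\Delta(\mathbb{P}_1)-\Delta(\mathbb{P}_2))\ge0$, so $\Gamma^{\star\uparrow}(\lambda)$ is non-increasing.

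For continuity of the traced curve, $\lambda\mapsto\Gamma^\star(\lambda)$ is the inverse of the monotone supergradient map $\Gamma\mapsto\partial v(\Gamma)$. Jumps in $\Gamma^\star(\lambda)$ occur exactly at the flat segments of $v$, and by the discussion above these segments are themselves filled in by maximizers at that single $\lambda$. The graph $\{(\Gamma^\star(\lambda),v(\Gamma^\star(\lambda)))\}$ together with these segments is the connected graph of $v$, and $v$ is continuous on the interior as a concave function.

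\textbf{Lower bound.} The lower-bound statement follows by applying everything to $-\mathcal{Q}$, which is still linear. This makes $\theta^-$ convex and non-increasing.
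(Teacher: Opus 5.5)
Your argument has the same structure as the paper's proof. Concavity of $\theta_j^+$ comes from mixing feasible distributions (its Steps~1--2). Part (i) comes from a supporting line of slope $\lambda_0\ge 0$ (Steps~3--4). Part (ii) uses the exchange inequality for monotonicity and the superdifferential for continuity (Step~5). Your caveats are correct, and your version is more careful than the paper's. Your concavity step takes suprema. The paper instead picks distributions attaining $\theta_j^+(\Gamma_1)$ and $\theta_j^+(\Gamma_2)$, so it assumes attainment without saying so.

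The larger issue is in Steps~4 and~5. The paper's Step~4 only shows that a distribution achieving $(\Gamma_0,\theta_j^+(\Gamma_0))$ maximizes $\mathcal{L}^{\uparrow}_{\lambda_0}$. That is the converse of what (i) asserts. Step~5 claims that $\lambda\mapsto\Gamma^{\star\uparrow}(\lambda)$ has no jumps, but it only discusses kinks, which produce flat regions. As you say, affine pieces of $\theta_j^+$ produce jumps and make ``any maximizer has $\Delta=\Gamma$'' false. (A1)--(A2) do not exclude such pieces: a one-sided total-variation constraint with a binary outcome makes $\theta_j^+$ affine in $\Gamma$ until it saturates. Your set-valued reading is therefore the right repair.

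When you write it out, take two maximizers $\mathbb{P}_1,\mathbb{P}_2$ at the same $\lambda>0$ and set $\mathbb{P}_t=(1-t)\mathbb{P}_1+t\mathbb{P}_2$. (A1) alone only gives an inequality for $\Delta(\mathbb{P}_t)$. It is optimality of $\mathbb{P}_t$ that forces $\Delta(\mathbb{P}_t)$ to be exactly affine in $t$, and this is what fills the segment. At $\lambda=0$ you only get $\Delta(\mathbb{P}_t)\le(1-t)\Delta(\mathbb{P}_1)+t\Delta(\mathbb{P}_2)$. That suffices there, because $\mathcal{Q}$ is already at its maximum.

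Finally, say explicitly that excluding $\Gamma=\Gamma_{\min}$ is necessary, not cosmetic. For the KL model with an outcome map that is non-constant in $u$, $\theta_j^+(\Gamma)-\theta_j^+(0)$ grows like $\sqrt{\Gamma}$. The supporting line at $\Gamma_{\min}$ is then vertical, so no finite $\lambda$ attains that point. This contradicts the paper's Step~3.
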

\vspace{-0.4cm}
\begin{proof}
    See Appendix~\ref{proof:lagrangian-sweep}
\end{proof}
\vspace{-0.3cm}

\begin{remark}[Assumptions A1 and A2 hold in our setting]
\label{rem:gtsm-convexity}
Assumption (A1) is satisfied for the marginal sensitivity model, $f$-sensitivity models with convex $f$ (i.e., $\mathrm{KL}$, $\chi^2$, total variation, Hellinger), and Rosenbaum's sensitivity model$^\star$. Assumption (A2) holds for CAPO, CATE, and ATE, as each can be expressed as an expectation and, hence, is a linear functional of $\mathbb{P}$.
\end{remark}
\vspace{-0.5cm}
\begin{proof}
    See Appendix~\ref{proof:gtsm-convexity}
\end{proof}
\vspace{-0.3cm}

The above theorem guarantees smooth and complete traversal of the Pareto frontier. This structural property makes warm-starting effective in practice: adjacent values of $\lambda$ produce optima that lie close together on the frontier, so a previously computed solution $\mathbb{P}^\star_\lambda$ provides a high-quality initialization for the subsequent optimization.

\vspace{-0.3cm}
\subsection{Instantiated label computation}
\label{sec:instantiated-label-computation}
\vspace{-0.2cm}

The remaining task is to instantiate the optimization problems in Eq.~\eqref{eq:lagrangian-lower} and Eq.\eqref{eq:lagrangian-upper} in a form that allows for large-scale computation across the prior. To this end, we reduce the abstract objective over $\mathbb{P} \in \mathcal{P}(\mathcal{S}_i)$ to a tractable optimization over a single conditional latent distribution.

First, existing GTSM theory \citep{frauenNEURALFRAMEWORKGENERALIZED2024a} shows that the optimization over the full distributions $\mathbb{P} \in \mathcal{P}(\mathcal{S}_i)$ can be reduced to an optimization over a latent distribution $\textcolor{darkgreen}{\mathbb{P}(U \mid x_j,A\neq a_j)}$. For our objective, this translates to
\begin{align}
    &\max_{\textcolor{blue}{\mathbb{P}(U \mid x_j)}}
    \mathcal{Q}\left(
    f_Y^{\mathcal{S}_i}(x_j,\cdot,a_j)_{\sharp}
    \textcolor{blue}{\mathbb{P}(U \mid x_j)}\right)
    -
    \lambda  
    \Delta_{x_j,a_j}
    \left(
    \textcolor{blue}{\mathbb{P}(U \mid x_j)},
    \mathbb{P}(U \mid x_j, a_j)
    \right),
    \label{eq:operational-objective} \\
    &\mathrm{where } \
    \textcolor{blue}{\mathbb{P}(U \mid x_j)}
    = \pi^{\mathcal{S}_i}(a_j \mid x_j)\mathbb{P}(U \mid x_j, a_j)
    + \left(1 - \pi^{\mathcal{S}_i}(a_j \mid x_j)\right)\,
    \textcolor{darkgreen}{\mathbb{P}(U \mid x_j,A\neq a_j)}
    . \label{eq:candidate-mixture}
\end{align}
We can fit $\textcolor{darkgreen}{\mathbb{P}(U \mid x_j,A\neq a_j)}$ fit with an unconstrained conditional normalizing flow (CNF) \citep{winklerLearningLikelihoodsConditional2019} or any other conditional density estimator of choice. The reparameterization construction in  Eq.~\eqref{eq:candidate-mixture} maps the latent distribution $\textcolor{darkgreen}{\mathbb{P}(U \mid x_j,A\neq a_j)}$ to $\textcolor{blue}{\mathbb{P}(U \mid x_j)}$ and guarantees the observational compatibility, i.e., $\mathbb{P}_{\mathrm{obs}} = \mathbb{P}_{\mathrm{obs}}^{\mathcal{S}_i}$. We provide detailed derivation and intuition of these results in Appendix~\ref{app:theoretical-details}. 

Since we generate data synthetically, we obtain one additional advantage: $f_Y$ is directly available to us and does not need to be estimated (This is unlike in \citet{frauenNEURALFRAMEWORKGENERALIZED2024a} where a second CNF fitting stage is needed).


\vspace{-0.3cm}
\section{Experiments}
\label{sec:experiments}
\vspace{-0.2cm}

The main goal of our experiments is to empirically validate the proposed theoretical construction and assess whether the amortized model accurately recovers sensitivity bounds across different settings. Because \emph{no} existing method supports zero-shot inference for causal sensitivity analysis, we use synthetic datasets to benchmark against the known oracle solutions (where available) and thus to analyze accuracy, robustness, and the sources of performance gains.


\begin{figure}[h]
    \vspace{-0.2cm}
    \centering
    \includegraphics[width=\linewidth]{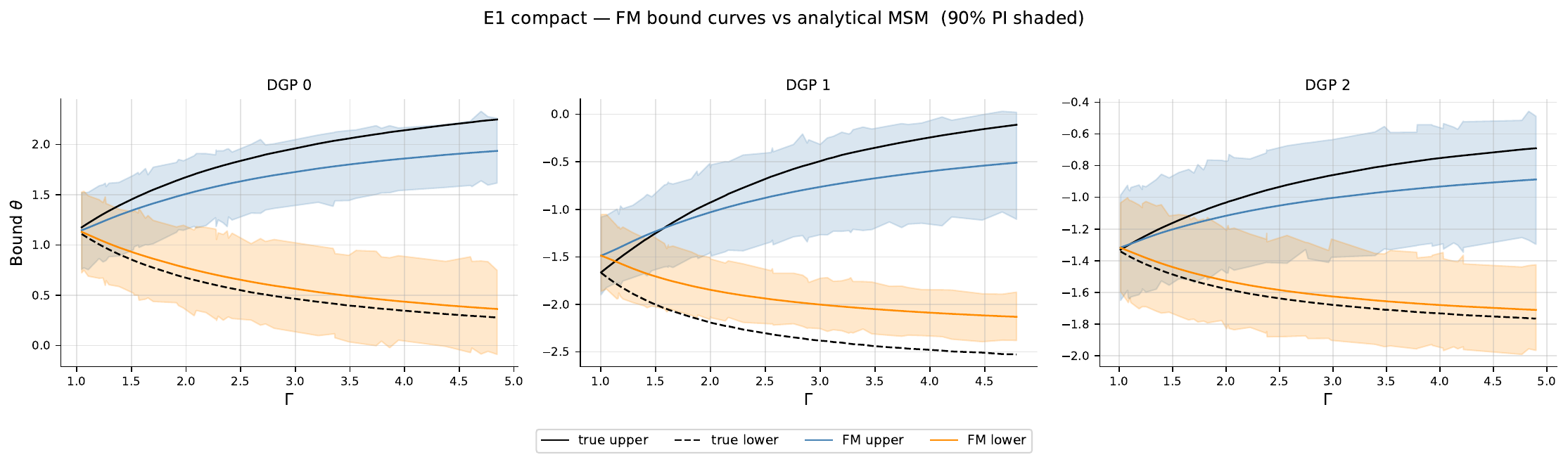}
    \vspace{-0.5cm}
    \caption{\footnotesize{
    \textbf{Example predictions:} $90\%$ posterior predictive intervals for lower and upper bounds for the MSM sensitivity model on three example DGPs. Analytically derived true bounds are shown in black.}}
    \label{fig:example-dgps}
    \vspace{-0.3cm}
\end{figure}

\vspace{-0.2cm}
\subsection{Implementation}
\label{sec:implementation}
\vspace{-0.2cm}

We construct our foundation model (FM) for sensitivity analysis by first sampling the synthetic prior, including the labels as described in Section~\ref{sec:label-construction}. We then train a PFN with two output heads to produce PPDs over the lower and upper bounds. Each head is trained by negative log-likelihood against the labels $\theta^{\star\downarrow}(\lambda)$ and $\theta^{\star\uparrow}(\lambda)$ from Section~\ref{sec:label-construction}. We additionally impose a soft monotonicity regularizer: for fixed $(D_N,x_j,a_j)$, the predicted lower/upper bound should be non-increasing/decreasing in $\Gamma$. The regularizer applies a zero-margin ReLU penalty to adjacent $\Gamma$-sorted predictive means within each $\{(D_N,x_j,a_j,\cdot)\}$ group that violate the expected nesting of the bounds.\footnote{
Empirically, the penalty only activates at the beginning of training, with no monotonicity enforcement needed thereafter, which is desirable.
} At test time, a single batched forward pass over datasets, query points, treatment arms, and sensitivity levels returns the corresponding lower- and upper-bound PPDs; credible intervals are obtained as posterior predictive quantiles. Architecture details, training hyperparameters, and prior-generation runtime are reported in Appendix~\ref{app:implementation-details}.

\vspace{-0.3cm}
\subsection{Sweep evaluation}
\label{sec:sweep-evaluation}
\vspace{-0.1cm}

\begin{wrapfigure}{r}{0.7\linewidth}
    \vspace{-0.8cm}
    \centering
    \includegraphics[width=\linewidth]{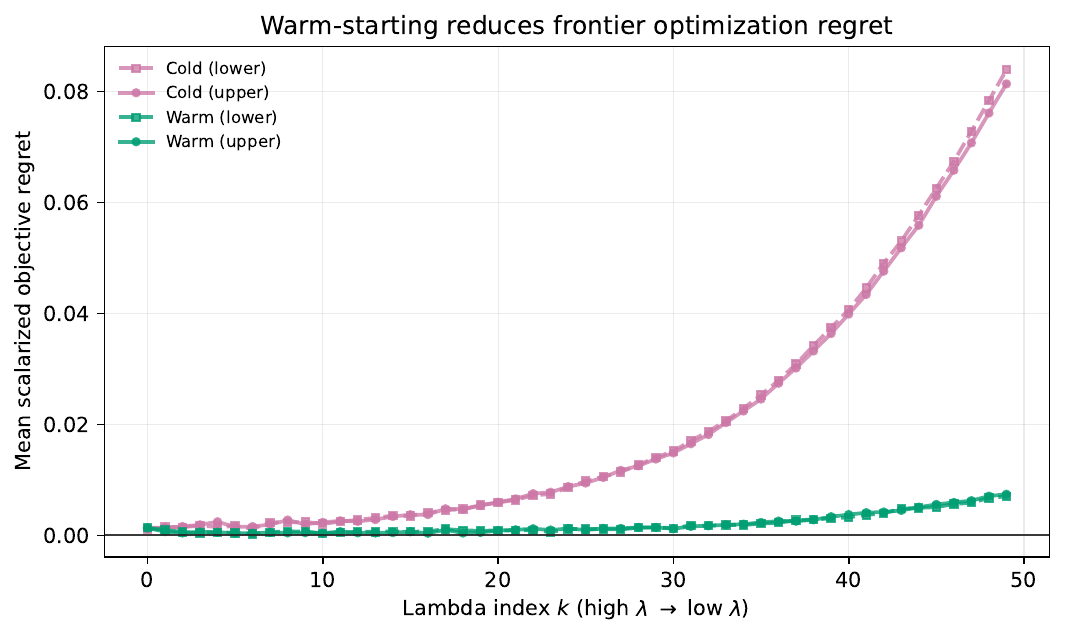}
    \vspace{-0.7cm}
    \caption{\footnotesize{\textbf{Warm start evaluation.} Mean scalarized objective regret along the $\lambda$-sweep ($k=0$ at $\lambda_{\mathrm{max}}=2.0$, $k=49$ at $\lambda_{\mathrm{min}}=0.08$.) measured against a high-budget reference (1000 steps). $\Rightarrow$ \emph{Warm starting achieves lower regret solutions while $1.90\times$ faster.}}}
    \label{fig:exp1-speed-test}
    \vspace{-0.1cm}
\end{wrapfigure}

$\bullet$\,\underline{Setting}: We evaluate whether a warm-starting sweep across the Pareto frontier improves the label-generation procedure. All runs use (the same) 128 synthetic DGPs, 128 input query $(x,a)$ rows per DGP, and a log-spaced grid of 50 values from $\lambda_{\max}=2.0$ to $\lambda_{\min}=0.08$ (same sweep used for PFN training). For each fixed $\lambda$, we optimize the scalarized $\mathrm{KL}$ $f$-sensitivity model. We compare the cold-started and warm-started runs with all other configurations held fixed (see Appendix~\ref{app:warm-start-ablation}). Regret is computed relative to a reference with a higher compute budget. The experiment was run on an \texttt{Nvidia H200} GPU at $\sim 90\%$ utilization. $\bullet$\,\underline{Results}: Figure~\ref{fig:exp1-speed-test} shows warm-starting leads to \emph{substantial improvements in regret at $1.90\times$ faster runtime}. The results are also robust: across $10$ repetitions, the performance was highly stable; i.e., $\pm 0.00001$ standard error in level of regret, and $\pm 0.01$ standard error of runtime. Additional plots are in Appendix~\ref{app:warm-start-ablation}.

\vspace{-0.1cm}
\subsection{MSM foundation model}
\label{sec:msm-foundation-model}
\vspace{-0.1cm}

$\bullet$~\underline{Training}: We trained a FM for the marginal sensitivity model (MSM), using $10,000$ synthetic datasets of $1024$ sampled points. We queried all covariates at both treatment arms and $50$ $\lambda$ values, so that we obtain over 1B ($10^9$) training pairs per bound (lower and upper each). Our FM was trained on a \texttt{Nvidia H200} GPU over $150$ epochs for a duration of approx. 19h 38m. $\bullet$~\underline{Results}: The PFN closely matches the analytical MSM bounds on held-out test instances. The posterior predictive intervals are well calibrated: the empirical $90\%$ coverage is $0.903$ for the upper bound and $0.895$ for the lower bound, close to the nominal $0.90$ level. One-sided failure rates are also near the target $5\%$ level, with $4.76\%$ for the upper bound and $5.39\%$ for the lower bound. Point predictions are essentially unbiased, with mean signed errors of $-0.0005$ and $-0.0029$ for the upper and lower bounds, respectively, and RMSEs of $0.272$ and $0.268$. $\bullet$~\underline{Insights}: In Figure~\ref{fig:example-dgps}, we show example predicted sensitivity bounds with $90\%$ predictive intervals on three DGPs. $\bullet$~\underline{Inference time:} Median time of a forward pass is $2.9381$ seconds\footnote{without targeted optimization}. Additional details are in Appendix~\ref{app:additional-experiments}.

\textbf{Conclusion:} We introduced an amortized approach to causal sensitivity analysis based on prior-data fitted networks. Our key contribution is a general prior-data label construction for sensitivity bounds, using Lagrangian scalarization to trace Pareto frontiers across sensitivity levels without requiring model-specific analytical derivations. Building on these labels, we trained a foundation model that predicts PPDs over lower and upper sensitivity bounds directly from a dataset, causal query, and sensitivity level. Empirically, our results show that this approach can approximate sensitivity bounds accurately while replacing costly per-instance optimization with fast batched forward inference, opening a path toward practical in-context causal sensitivity analysis.

\begin{ack}


This paper is supported by the DAAD programme Konrad Zuse Schools of Excellence in Artificial Intelligence, sponsored by the Federal Ministry of Research, Technology and Space.
\end{ack}

\bibliography{bibliography}

\newpage
\appendix

\section{Proofs}
\label{app:proofs}

\subsection{Theorem~\ref{thm:lagrangian-sweep}}
\begin{proof}[\textbf{Theorem~\ref{thm:lagrangian-sweep}}]
\label{proof:lagrangian-sweep}

The argument proceeds by establishing concavity of the upper frontier and then deriving (i) and (ii) from supporting-hyperplane geometry. The lower bound case is symmetric.

\textit{Step 1: $\mathcal{P}(\mathcal{S}_i)$ is convex.}
The observational-compatibility constraint $\mathbb{P}_{\text{obs}} = \mathbb{P}_{\text{obs}}^{\mathcal{S}_i}$, is linear, $\int_{\mathcal{U}}\mathbb{P}\mathrm{d}u = \mathbb{P}_{\text{obs}}^{\mathcal{S}_i}$, and therefore $\mathcal{P}(\mathcal{S}_i) = \{\mathbb{P} : \mathbb{P}_{\mathrm{obs}} = \mathbb{P}_{\mathrm{obs}}^{\mathcal{S}_i}\}$ is convex.

\textit{Step 2: $\theta_j^+$ is concave in $\Gamma$.}
Let $\Gamma_1, \Gamma_2 \geq \Gamma_{\min}$, and $t \in [0,1]$. Pick $\mathbb{P}_1, \mathbb{P}_2 \in \mathcal{P}(\mathcal{S}_i)$ achieving the upper bounds at $\Gamma_1, \Gamma_2$, meaning $\Delta_{x_j,a_j}(\mathbb{P}_k) \leq \Gamma_k$ and $\mathcal{Q}(\mathbb{P}_k(Y(a_j) \mid x_j)) = \theta_j^+(\Gamma_k)$ for $k=1,2$. Define the linear interpolation $\mathbb{P}_t = (1-t)\mathbb{P}_1 + t\mathbb{P}_2$. By Step 1, $\mathbb{P}_t \in \mathcal{P}(\mathcal{S}_i)$. By (A1),

$$
\Delta_{x_j, a_j}(\mathbb{P}_t) \leq (1-t)\Delta_{x_j,a_j}(\mathbb{P}_1) + t \Delta_{x_j,a_j}(\mathbb{P}_2) \leq (1-t)\Gamma_1 + t\Gamma_2,
$$

so $\mathbb{P}_t$ is feasible for the forward problem at sensitivity level $\Gamma_t = (1-t)\Gamma_1 + t\Gamma_2$. By (A2),

$$
\mathcal{Q}(\mathbb{P}_t(Y(a_j) \mid x_j)) = (1-t)\theta_j^+(\Gamma_1) + t \theta_j^+(\Gamma_2).
$$

Since this value is attained by a feasible distribution, $\theta_j^+(\Gamma_t) \geq (1-t)\theta_j^+(\Gamma_1) + t\theta_j^+(\Gamma_2)$, $\theta_j^+$ is concave in $\Gamma$.

\textit{Step 3: Achievable set has supporting hyperplanes everywhere on its upper boundary.}
The upper boundary of the achievable set $\mathcal{A} = \{(\Gamma, \theta) : \exists \mathbb{P} \in \mathcal{P}(\mathcal{S}_i),\ \Delta_{x_j,a_j}(\mathbb{P}) \leq \Gamma,\ \mathcal{Q}(\mathbb{P}(Y(a_j)\mid x_j)) = \theta\}$ is the graph of the concave map $\Gamma \mapsto \theta_j^+(\Gamma)$ from Step 2. The hypograph of a concave function is convex, so by the supporting hyperplane theorem, every point $(\Gamma_0, \theta_j^+(\Gamma_0))$ on the upper boundary admits a supporting hyperplane with normal of the form $(-\lambda_0, 1)$ for some $\lambda_0 \geq 0$.

\textit{Step 4: Frontier coverage (i).}
Fix $\Gamma_0$ and let $\lambda_0$ be the supporting hyperplane slope from Step 3. Membership of $(\Gamma_0, \theta_j^+(\Gamma_0))$ in the supporting hyperplane means

$$
\theta - \lambda_0 \Gamma \leq \theta_j^+(\Gamma_0) - \lambda_0 \Gamma_0 \quad \text{for all } (\Gamma, \theta) \in \mathcal{A}.
$$

Translating back to distributions, every $\mathbb{P} \in \mathcal{P}(\mathcal{S}_i)$ with $\Delta_{x_j,a_j}(\mathbb{P}) \leq \Gamma$ and $\mathcal{Q}(\mathbb{P}(Y(a_j) \mid x_j)) = \theta$ satisfies

$$
\mathcal{Q}(\mathbb{P}(Y(a_j) \mid x_j)) - \lambda_0 \Delta_{x_j,a_j}(\mathbb{P}) \leq \mathcal{L}_{\lambda_0}^{\uparrow}(\mathbb{P}_0)
$$

for any $\mathbb{P}_0$ achieving $(\Gamma_0, \theta_j^+(\Gamma_0))$. Since the right-hand side is the maximum value of $\mathcal{L}_{\lambda_0}^{\uparrow}$, $\mathbb{P}_0 \in \arg\max_{\mathbb{P}} \mathcal{L}_{\lambda_0}^{\uparrow}(\mathbb{P})$. Setting $\lambda(\Gamma_0) = \lambda_0$ proves (i).

\textit{Step 5: Smooth traversal (ii).}
We show $\lambda \mapsto \Gamma^{\star\uparrow}(\lambda)$ is monotone non-increasing. Let $\lambda_1 > \lambda_2$ and let $\mathbb{P}_k = \mathbb{P}_{\lambda_k}^{\star\uparrow}$ with $\Gamma_k = \Delta_{x_j,a_j}(\mathbb{P}_k)$ and $\theta_k = \mathcal{Q}(\mathbb{P}_k(Y(a_j) \mid x_j))$ for $k=1,2$. Optimality at each $\lambda_k$ gives

$$
\theta_1 - \lambda_1 \Gamma_1 \geq \theta_2 - \lambda_1 \Gamma_2, \qquad \theta_2 - \lambda_2 \Gamma_2 \geq \theta_1 - \lambda_2 \Gamma_1.
$$

Adding the two inequalities yields $(\lambda_1 - \lambda_2)(\Gamma_2 - \Gamma_1) \geq 0$, and since $\lambda_1 > \lambda_2$ we conclude $\Gamma_2 \geq \Gamma_1$. Hence $\Gamma^{\star\uparrow}$ is monotone non-increasing in $\lambda$.

For continuity: by Step 3, the supporting hyperplane slope $\lambda(\Gamma)$ ranges through the (negated) superdifferential of the concave function $\theta_j^+$ as $\Gamma$ varies over its domain. Concavity ensures the superdifferential mapping is upper hemicontinuous and surjective onto $[0, \infty)$, so $\Gamma \mapsto \lambda(\Gamma)$ has no gaps in its range; equivalently, $\lambda \mapsto \Gamma^{\star\uparrow}(\lambda)$ has no jump discontinuities. The frontier point $(\Gamma^{\star\uparrow}(\lambda), \theta^{\star\uparrow}(\lambda))$ thus traces the graph of $\theta_j^+$ continuously as $\lambda$ varies. (Kinks in $\theta_j^+$, where the superdifferential is set-valued, correspond to flat regions in $\Gamma^{\star\uparrow}(\lambda)$, not to discontinuous jumps.)

\end{proof}

\subsection{Remark~\ref{rem:gtsm-convexity}}
\begin{proof}[\textbf{Remark~\ref{rem:gtsm-convexity}}]
\label{proof:gtsm-convexity}
We show assumptions A1, A2 hold for the settings considered.

\textbf{A1:}
Fix $(x_j,a_j)$ and write
\[
    \pi_j
    =
    \mathbb{P}_{\mathrm{obs}}^{\mathcal{S}_i}(a_j \mid x_j),
    \qquad
    p^\star(u)
    =
    \mathbb{P}(u \mid x_j,a_j),
    \qquad
    q(u)
    =
    \mathbb{P}(u \mid x_j,A\neq a_j).
\]
Under the reduced GTSM parametrization, the optimization variable is $q$, while
$p^\star$ and $\pi_j$ are fixed by observational compatibility. Moreover,
\[
    p(u \mid x_j)
    =
    \pi_j p^\star(u)
    +
    (1-\pi_j)q(u),
\]
and the relevant latent density ratio is
\[
    r_q(u)
    =
    \frac{q(u)}{p^\star(u)}.
\]
Hence, for any two feasible latent distributions $q_1,q_2$ and any $t\in[0,1]$,
\[
    q_t=(1-t)q_1+tq_2
    \quad\Longrightarrow\quad
    r_{q_t}(u)
    =
    (1-t)r_{q_1}(u)+t r_{q_2}(u).
\]
Thus $q\mapsto r_q$ is affine.

\textbf{MSM:} For the marginal sensitivity model, the GTSM functional can be written as
\[
    \Delta_{x_j,a_j}^{\mathrm{MSM}}(q)
    =
    \max\left\{
        \sup_{u\in\mathcal{U}} r_q(u),
        \sup_{u\in\mathcal{U}} r_q(u)^{-1}
    \right\}.
\]
The map $r\mapsto r$ is linear and the map $r\mapsto r^{-1}$ is convex on
$\mathbb{R}_{>0}$. Supremums of convex functions are convex, and the maximum
of convex functions is convex. Therefore
$q\mapsto \Delta_{x_j,a_j}^{\mathrm{MSM}}(q)$ is convex.

\textbf{$f$-sensitivity:} For an $f$-sensitivity model written in the $f$-divergence form
\[
    \Delta_{x_j,a_j}^{f}(q)
    =
    \int_{\mathcal{U}} f(r_q(u))p^\star(u)\,du,
\]
convexity follows directly from convexity of $f$. Indeed,
\[
\begin{aligned}
    \Delta_{x_j,a_j}^{f}(q_t)
    &=
    \int_{\mathcal{U}}
    f\left((1-t)r_{q_1}(u)+t r_{q_2}(u)\right)
    p^\star(u)\,du
    \\
    &\leq
    (1-t)
    \int_{\mathcal{U}} f(r_{q_1}(u))p^\star(u)\,du
    +
    t
    \int_{\mathcal{U}} f(r_{q_2}(u))p^\star(u)\,du
    \\
    &=
    (1-t)\Delta_{x_j,a_j}^{f}(q_1)
    +
    t\Delta_{x_j,a_j}^{f}(q_2).
\end{aligned}
\]
Thus $q\mapsto \Delta_{x_j,a_j}^{f}(q)$ is convex for convex $f$, including
the usual KL, $\chi^2$, total variation, and Hellinger choices when expressed
in this direction. If the model is defined as the maximum of finitely many such
convex divergence functionals, convexity is preserved because pointwise maxima
of convex functions are convex.

\textbf{$(\star)$ Rosenbaum:} We show Rosenbaum's model gives convex feasible sets
$\{q:\Delta_{x_j,a_j}^{\mathrm{RB}}(q)\leq \Gamma\}$, which is sufficient for the constrained optimization purpose in this paper.

For Rosenbaum's sensitivity model, the scalar functional is commonly written as
\[
    \Delta_{x_j,a_j}^{\mathrm{RB}}(q)
    =
    \sup_{u_1,u_2\in\mathcal{U}}
    \frac{r_q(u_1)}{r_q(u_2)}.
\]
The corresponding level set at sensitivity level $\Gamma$ is
\[
    \left\{
        q:
        r_q(u_1)
        \leq
        \Gamma r_q(u_2)
        \text{ for all }u_1,u_2\in\mathcal{U}
    \right\}.
\]
This level set is convex: if $q_1$ and $q_2$ both satisfy the above inequalities,
then for $q_t=(1-t)q_1+tq_2$,
\[
\begin{aligned}
    r_{q_t}(u_1)
    &=
    (1-t)r_{q_1}(u_1)+t r_{q_2}(u_1)
    \\
    &\leq
    \Gamma\left((1-t)r_{q_1}(u_2)+t r_{q_2}(u_2)\right)
    \\
    &=
    \Gamma r_{q_t}(u_2).
\end{aligned}
\]
Hence Rosenbaum's model gives convex feasible sets
$\{q:\Delta_{x_j,a_j}^{\mathrm{RB}}(q)\leq \Gamma\}$, which is a sufficient condition for the purposes of our constrained optimization. Note
$\Delta_{x_j,a_j}^{\mathrm{RB}}$ itself is not generally convex as a scalar
functional.

\textbf{A2:} Finally, Assumption (A2) holds for CAPO because
\[
    Q(x_j,a_j,\mathbb{P})
    =
    \mathbb{E}_{\mathbb{P}}[Y(a_j)\mid x_j]
    =
    \int_{\mathcal{Y}} y\,d\mathbb{P}(Y(a_j)=y\mid x_j)
\]
is linear in the potential-outcome distribution. CATE is a difference of two
CAPO functionals and is therefore linear. ATE is an average of CAPO or CATE over
the covariate distribution and is therefore linear as well.
\end{proof}

\section{Theoretical Details}
\label{app:theoretical-details}

This appendix derives the operational objective \eqref{eq:operational-objective} from the abstract Lagrangian \eqref{eq:lagrangian-lower}--\eqref{eq:lagrangian-upper}, making explicit each reduction taken and verifying observational compatibility at each step. Throughout, we work with binary treatments $a_j \in \{0, 1\}$, scalar outcomes $Y \in \mathbb{R}$, and the conditional average potential outcome (CAPO) functional $\mathcal{Q} = \mathbb{E}[\cdot]$. Generalizations to multivariate outcomes and other monotone causal functionals follow the same steps with notational overhead.

\textbf{What we adapt and what is specific to our setting:} This part of the appendix derives the operational objective \eqref{eq:operational-objective} by combining (i) the latent-space reformulation theorem of NeuralCSA \citep{frauenNEURALFRAMEWORKGENERALIZED2024a}, (ii) the transformation invariance of GTSMs (NeuralCSA Lemma 2), and (iii) constructions specific to our prior-fitted setting. The Theorem 1 reduction (\ref{subsec:latent-reformulation}) and the standard-normal reference convention (\ref{subsec:normal-reference}) are taken from NeuralCSA and stated without re-derivation. What is specific to our setting: the elimination of NeuralCSA's first-stage outcome-density estimation (\ref{subsec:removal-epsilon}), the reparameterization construction expressed directly in terms of the SCM's propensity (§\ref{subsec:reparame-construction}), the explicit derivation of the GTSM divergences as functionals of the single density ratio $r_\nu$ (\ref{subsec:gtsm-divergences}), and the details of Monte Carlo implementation (\ref{subsec:MC-estimation} and \ref{subsec:causal-query}. The Manski limit characterization (\ref{subsec:Manski-limit}) is a sanity check on our construction.

\subsection{Setup}

Recall the abstract optimization problem from \eqref{eq:lagrangian-upper}: for fixed SCM $\mathcal{S}_i$, query $(x_j, a_j)$, and Lagrange multiplier $\lambda > 0$,
\begin{align}
    \max_{\mathbb{P} \in \mathcal{P}(\mathcal{S}_i)}
    \mathcal{Q}\bigl(\mathbb{P}(Y(a_j) \mid x_j)\bigr)
    -
    \lambda \Delta_{x_j, a_j}
    \bigl(
        \mathbb{P}(U \mid x_j),
        \mathbb{P}(U \mid x_j, a_j)
    \bigr),
\end{align}
where $\mathcal{P}(\mathcal{S}_i) = \{\mathbb{P} : \mathbb{P}_{\text{obs}} = \mathbb{P}_{\text{obs}}^{\mathcal{S}_i}\}$ is the observationally-compatible class. The optimization is over full distributions $\mathbb{P}$ on $(X, U, A, Y)$, an infinite-dimensional object. Here, we derive the reduction of this to a tractable optimization over a single, computable quantity.

\subsection{Latent-space reformulation}
\label{subsec:latent-reformulation}

\textbf{Result.} Under any transformation-invariant GTSM, the abstract optimization above is equivalent to an optimization over only the counterfactual-arm latent distribution $\mathbb{P}(U \mid x_j, A \neq a_j)$, with the queried-arm latent $\mathbb{P}^\star(U \mid x_j, a_j)$ and outcome map $f^\star_Y$ held fixed at any choice satisfying the queried-arm pushforward identity $\mathbb{P}_{\text{obs}}^{\mathcal{S}_i}(Y \mid x_j, a_j) = (f^\star_Y(x_j, \cdot, a_j))_\sharp \mathbb{P}^\star(U \mid x_j, a_j)$.

\textbf{Sketch of argument.} NeuralCSA \citep{frauenNEURALFRAMEWORKGENERALIZED2024a} Theorem 1 establishes that the supremum over $\mathbb{P} \in \mathcal{P}(\mathcal{S}_i) \cap \mathcal{M}_\Gamma$ (sensitivity-constrained candidates) is attained by a sequence of full distributions for which the queried-arm latent and outcome map are fixed. The full proof proceeds in three steps: (i) construct an alternative full distribution that induces the prescribed $\mathbb{P}^\star(U \mid x_j, a_j)$ and $f^\star_Y$; (ii) verify it remains in the GTSM via transformation invariance; (iii) verify it attains the same query value. We refer the reader to NeuralCSA \citep[Appendix B.3]{frauenNEURALFRAMEWORKGENERALIZED2024a} for the full construction.

\textbf{Intuition.} The intuition behind the theorem lies in the understanding that (a) the sensitivity model divergence constraint operates only in the latent space, and (b) the space of distributions over $X,U,A,Y$ has unnecessarily many degrees of freedom, even when restricted to $\mathbb{P} \in \mathcal{P}(\mathcal{S}_i) \cap \mathcal{M}_\Gamma$. We can thus fix some elements in a convenient way.

For our purposes, the consequence is: the optimization is over $\mathbb{P}(U \mid x_j, A \neq a_j)$ alone, and the objective reads
\begin{align}
    \sup_{\mathbb{P}(U \mid x_j, A \neq a_j)}
    \mathcal{Q}
    \bigl(
        (f^\star_Y(x_j, \cdot, a_j))_\sharp
        \mathbb{P}(U \mid x_j)
    \bigr)
    -
    \lambda \Delta_{x_j, a_j}
    \bigl(
        \mathbb{P}(U \mid x_j),
        \mathbb{P}^\star(U \mid x_j, a_j)
    \bigr),
\end{align}
with $\mathbb{P}(U \mid x_j)$ recovered from the candidate's two arm-conditional latents by the law of total probability, as stated in~\eqref{eq:candidate-mixture}.

\subsection{Standard-normal reference convention}
\label{subsec:normal-reference}

\textbf{Result.} We may fix $\mathbb{P}^\star(U \mid x_j, a_j) = \mathcal{N}(0, I)$ without loss of generality.

\textbf{Justification.} GTSMs are transformation-invariant (NeuralCSA Lemma 2): for any measurable $t : \mathcal{U} \to \widetilde{\mathcal{U}}$, the divergence satisfies $D_{x, a}(\mathbb{P}(U \mid x), \mathbb{P}(U \mid x, a)) \geq D_{x, a}(\mathbb{P}(t(U) \mid x), \mathbb{P}(t(U) \mid x, a))$. The MSM, $f$-sensitivity, and Rosenbaum models are all transformation-invariant (NeuralCSA Lemma 2). Applying the inverse CDF transform (or any other invertible map sending $\mathbb{P}^\star(U \mid x_j, a_j)$ to $\mathcal{N}(0, I)$) preserves both the partial-identification problem and the GTSM constraint. We adopt $\phi := \mathcal{N}(0, I)$ density throughout the rest of the derivation.

\textbf{Constructive coincidence in our setting.} Our prior over SCMs imposes $\mathbb{P}^{\mathcal{S}_i}(U \mid X, A) = \mathcal{N}(0, I)$ for all $(X, A)$ by design, so the standard-normal reference matches the true SCM's queried-arm latent exactly rather than via transformation. This is purely a parameterization convenience and does not restrict the class of observational distributions the prior can represent.

\subsection{Removal of $\varepsilon_Y$}
\label{subsec:removal-epsilon}

\textbf{Result.} We may take the outcome map $f^\star_Y$ to be deterministic and invertible in $u$ for fixed $(x, a)$. There is no exogenous noise $\varepsilon_Y$ in the structural assignment $Y = f^\star_Y(x, U, a)$.

\textbf{Justification.} Theorem 1 of NeuralCSA further establishes that the bound-attaining structural model has the form $\mathbb{P}^\star(Y \mid x, a, u) = \delta(Y - f^\star_{x, a}(u))$ for an invertible $f^\star_{x, a} : \mathcal{U} \to \mathcal{Y}$. Intuitively, any distribution $\mathbb{P}(Y \mid x, a, u)$ that induces $Y \perp U \mid X, A$ would satisfy observational compatibility but imply unconfoundedness and not yield a valid bound. Maximal mutual information between $U$ and $Y$ --- required for the bound --- is achieved when $Y$ is a deterministic invertible function of $U$. See NeuralCSA Appendix B.3 for the full argument.

\textbf{Computational consequence in our setting.} In NeuralCSA's two-stage approach, $f^\star_{x, a}$ is fit from observational data via a Stage-1 normalizing flow trained to satisfy the queried-arm pushforward identity. In our synthetic setting, $f^\star_{x, a}$ is the outcome BNN drawn during prior sampling, which is deterministic and invertible by construction. Stage 1 is therefore unnecessary --- we read $f^\star_Y$ directly from $\mathcal{S}_i$.

\subsection{The reparameterization construction}
\label{subsec:reparame-construction}

After the reductions in \ref{subsec:latent-reformulation}--\ref{subsec:removal-epsilon}, the optimization variable is $\nu := \mathbb{P}(U \mid x_j, A \neq a_j)$, and the candidate's covariate-conditional latent $\mathbb{P}(U \mid x_j)$ is determined from $\nu$ via the law of total probability:
\begin{align}
    \mathbb{P}(U \mid x_j)
    =
    \pi^{\mathcal{S}_i}(a_j \mid x_j)
    \mathbb{P}(U \mid x_j, a_j)
    +
    \bigl(1 - \pi^{\mathcal{S}_i}(a_j \mid x_j)\bigr)
    \mathbb{P}(U \mid x_j, A \neq a_j).
\end{align}
Substituting the queried-arm reference $\mathbb{P}(U \mid x_j, a_j) = \phi$:
\begin{align}
    \mathbb{P}(U \mid x_j)
    =
    \pi^{\mathcal{S}_i}(a_j \mid x_j)\phi
    +
    \bigl(1 - \pi^{\mathcal{S}_i}(a_j \mid x_j)\bigr)\nu
\end{align}
This is the propensity-weighted mixture used in \eqref{eq:operational-objective}. We write $\pi := \pi^{\mathcal{S}_i}(a_j \mid x_j)$ for brevity in what follows.

\textbf{Observational compatibility holds by construction.} The class $\mathcal{P}(\mathcal{S}_i)$ requires the candidate to reproduce $\mathbb{P}_{\text{obs}}^{\mathcal{S}_i}$ on $(X, A, Y)$. Compatibility on $X$ and $A$ is automatic from sharing the SCM's marginal and propensity. Compatibility on $Y$ decomposes by treatment arm:
\begin{align}
    \mathbb{P}_{\text{obs}}^{\mathcal{S}_i}(Y \mid x_j, a_j)
    =
    \bigl(f^\star_Y(x_j, \cdot, a_j)\bigr)_\sharp
    \mathbb{P}(U \mid x_j, a_j),
\end{align}
\begin{align}
    \mathbb{P}_{\text{obs}}^{\mathcal{S}_i}(Y \mid x_j, A \neq a_j)
    =
    \bigl(f^\star_Y(x_j, \cdot, A \neq a_j)\bigr)_\sharp
    \mathbb{P}(U \mid x_j, A \neq a_j).
\end{align}
At the queried arm, fixing $\mathbb{P}(U \mid x_j, a_j) = \phi$ together with $f^\star_Y$ from $\mathcal{S}_i$ gives compatibility automatically: $(f^\star_Y(x_j, \cdot, a_j))_\sharp \phi = \mathbb{P}_{\text{obs}}^{\mathcal{S}_i}(Y \mid x_j, a_j)$ holds because $\mathbb{P}^{\mathcal{S}_i}(U \mid x_j, a_j) = \phi$ in the SCM's own construction. At the counterfactual arm, an analogous identity must hold for $\mathbb{P}(U \mid x_j, A \neq a_j)$.

This counterfactual-arm identity is the constraint that observational compatibility imposes. It is \emph{not} automatic for an unconstrained $\nu$. However, NeuralCSA Theorem 1 establishes that the reformulation is \emph{sufficient} --- that is, every $\mathbb{P} \in \mathcal{P}(\mathcal{S}_i)$ is attainable by some choice of $\nu$ in the reparameterization, and conversely, the bounds attained by sweeping $\nu$ saturate those of the original problem. The constraint at the counterfactual arm is absorbed into the reparameterization rather than appearing as a separate side condition.

\textbf{Implication for parameterization.} The optimization variable $\nu$ should be parameterized by an \emph{unconstrained} density estimator (e.g., a normalizing flow) --- the reparameterization handles compatibility, and no projection step or majorization constraint is needed. Naïvely parameterizing $\mathbb{P}(U \mid x_j)$ directly (rather than $\nu$) and recovering $\nu$ implicitly fails: not every $\mathbb{P}(U \mid x_j)$ admits a valid corresponding $\nu \geq 0$ via the inverse-mixture formula, and an unconstrained flow on $\mathbb{P}(U \mid x_j)$ would explore observationally-incompatible candidates outside $\mathcal{P}(\mathcal{S}_i)$.

\subsection{GTSM divergences under the reparameterization}
\label{subsec:gtsm-divergences}

We now derive the form of the GTSM divergence $\Delta_{x_j, a_j}(\mathbb{P}(U \mid x_j), \phi)$ for each sensitivity model in scope, expressing the result as a functional of the density ratio
\begin{align}
    r_\nu(u) := \frac{\nu(u)}{\phi(u)},
\end{align}
which is the natural quantity to compute when $\nu$ is parameterized by a normalizing flow.

The starting point is NeuralCSA Lemma 1 (extended in Lemma 3), which provides a single density-ratio quantity through which all three GTSMs of interest are expressed. Define
\begin{align}
    \rho(x, u, a)
    :=
    \frac{1}{1 - \mathbb{P}(a \mid x)}
    \left(
        \frac{\mathbb{P}(u \mid x)}
        {\mathbb{P}(u \mid x, a)}
        -
        \mathbb{P}(a \mid x)
    \right).
\end{align}
By NeuralCSA Lemma 3 (Eq. 12--15), $\rho$ equals the full propensity odds ratio $\text{OR}(\mathbb{P}(a \mid x), \mathbb{P}(a \mid x, u))$.

Under our reparameterization, with $\mathbb{P}(u \mid x_j) = \pi\phi(u) + (1-\pi)\nu(u)$ and $\mathbb{P}(u \mid x_j, a_j) = \phi(u)$, and writing $\pi = \pi^{\mathcal{S}_i}(a_j \mid x_j)$:
\begin{align}
    \rho(x_j, u, a_j)
    &=
    \frac{1}{1 - \pi}
    \left(
        \frac{\pi\phi(u) + (1-\pi)\nu(u)}
        {\phi(u)}
        -
        \pi
    \right) \\
    &=
    \frac{1}{1-\pi}
    \frac{(1-\pi)\nu(u)}{\phi(u)} \\
    &=
    r_\nu(u).
\end{align}
The propensity factor $\pi$ thus cancels analytically. Henceforth $\rho \equiv r_\nu$ in all three GTSM formulas below.

\paragraph*{Marginal sensitivity model (MSM)}

By NeuralCSA Lemma 3, Eq. (9), the MSM divergence on the candidate is
\begin{align}
    D^{\text{MSM}}_{x_j, a_j}
    &=
    \max
    \left(
        \sup_{u} \rho(x_j, u, a_j),
        \sup_{u} \rho(x_j, u, a_j)^{-1}
    \right) \\
    &=
    \max
    \left(
        \sup_{u} r_\nu(u),
        \sup_{u} r_\nu(u)^{-1}
    \right).
\end{align}
The MSM constraint $D^{\text{MSM}}_{x_j, a_j} \leq \Gamma$ is equivalent to $\Gamma^{-1} \leq r_\nu(u) \leq \Gamma$ pointwise in $u$.

\paragraph*{$f$-sensitivity models}

By NeuralCSA Lemma 3, Eq. (10), the $f$-sensitivity divergence is
\begin{align}
    D^{f}_{x_j, a_j}
    =
    \max
    \left(
        \int_\mathcal{U} f(\rho)\mathbb{P}(u \mid x_j, a_j)\,\mathrm{d}u,
        \int_\mathcal{U} f(\rho^{-1})\mathbb{P}(u \mid x_j, a_j)\,\mathrm{d}u
    \right).
\end{align}
With $\mathbb{P}(u \mid x_j, a_j) = \phi$ and $\rho = r_\nu$:
\begin{align}
    D^{f}_{x_j, a_j}
    =
    \max
    \left(
        \mathbb{E}_{U \sim \phi}[f(r_\nu(U))],
        \mathbb{E}_{U \sim \phi}[f(r_\nu(U)^{-1})]
    \right).
\end{align}
For the KL specialization ($f(t) = t \log t$), the two terms reduce to standard KL divergences:
\begin{align}
    \mathbb{E}_\phi[r_\nu \log r_\nu]
    &=
    \mathrm{KL}(\nu \mid \phi),
    \\
    \mathbb{E}_\phi[r_\nu^{-1} \log r_\nu^{-1}]
    &=
    -\mathbb{E}_\nu[\log r_\nu]
    \cdot
    \frac{1}{r_\nu(\cdot)}
    \bigg|_{\text{IS reweighted}},
\end{align}
with the reverse-direction term computed by importance reweighting from $\nu$-samples to $\phi$-samples (see \ref{subsec:MC-estimation} below).

\paragraph*{Rosenbaum's sensitivity model}

By NeuralCSA Lemma 3, Eq. (11), the Rosenbaum divergence is
\begin{align}
    D^{\text{Ros}}_{x_j, a_j}
    =
    \max
    \left(
        \sup_{u_1, u_2} \rho(x_j, u_1, u_2, a_j),
        \sup_{u_1, u_2} \rho(x_j, u_1, u_2, a_j)^{-1}
    \right),
\end{align}
where $\rho(x, u_1, u_2, a)$ is the two-point ratio defined in NeuralCSA Eq. (17). NeuralCSA Eq. (18)--(20) establishes that $\rho(x, u_1, u_2, a) = \text{OR}(\mathbb{P}(a \mid x, u_1), \mathbb{P}(a \mid x, u_2))$. Each marginal full-propensity ratio reduces to $r_\nu$ under our reparameterization (by the same calculation as the single-point case in \ref{subsec:gtsm-divergences} above), so
\begin{align}
    \rho(x_j, u_1, u_2, a_j)
    &=
    \frac{r_\nu(u_2)}{r_\nu(u_1)},
    \\
    D^{\text{Ros}}_{x_j, a_j}
    &=
    \frac{\sup_u r_\nu(u)}{\inf_u r_\nu(u)}.
\end{align}
(The sup-over-$u_1, u_2$ of a ratio reduces to the ratio of sup over numerator and inf over denominator.)

\subsection{Monte Carlo estimation under the reparameterization}
\label{subsec:MC-estimation}

For all three sensitivity models, the divergence reduces to a functional of $r_\nu(u) = \nu(u)/\phi(u)$. Both $\nu$ (via the normalizing flow) and $\phi$ (the standard normal density) are tractable, so $r_\nu$ is directly computable at any $u$.

For Monte Carlo estimation, the natural sampling distribution depends on the divergence. For MSM and Rosenbaum, the divergence is a sup/inf over $u$ --- sample support determines coverage, and any distribution with broad coverage of the relevant region suffices. We sample $u^{(j)} \sim \nu$ via the flow (which has sufficient support for the regions where $r_\nu$ is large). For $f$-sensitivity, the divergence is an expectation under $\phi$, so we sample $u^{(j)} \sim \phi$ directly. For the KL reverse term $\mathbb{E}_\phi[r_\nu^{-1} \log r_\nu^{-1}]$, importance reweighting from $\nu$-samples gives $\mathbb{E}_\nu[r_\nu^{-2} \cdot (-\log r_\nu)] / \mathbb{E}_\nu[r_\nu^{-1}]$, but since $\nu$ samples are already produced as a byproduct of the causal-query estimator (\ref{subsec:causal-query}), we reuse them.

\subsection{The causal query under the reparameterization}
\label{subsec:causal-query}

The CAPO query at $(x_j, a_j)$ is
\begin{align}
    \mathcal{Q}(x_j, a_j; \mathbb{P})
    =
    \mathbb{E}_{U \sim \mathbb{P}(U \mid x_j)}
    \bigl[
        f^\star_Y(x_j, U, a_j)
    \bigr].
\end{align}
Using $\mathbb{P}(U \mid x_j) = \pi\phi + (1-\pi)\nu$:
\begin{align}
    \mathcal{Q}(x_j, a_j; \mathbb{P})
    =
    \pi
    \underbrace{
        \mathbb{E}_{U \sim \phi}
        [f^\star_Y(x_j, U, a_j)]
    }_{=: \mathcal{Q}_0(x_j, a_j; \mathcal{S}_i)}
    +
    (1-\pi)
    \mathbb{E}_{U \sim \nu}
    [f^\star_Y(x_j, U, a_j)].
\end{align}
The first term $\mathcal{Q}_0$ is fixed by $\mathcal{S}_i$ --- it does not depend on the optimization variable $\nu$ --- and equals the queried-arm CAPO under unconfoundedness for $\mathcal{S}_i$. It contributes a constant to the Lagrangian objective and may be precomputed once per query. Only the second term is optimized through $\nu$.

The standard $\xi$-mixture sampler (NeuralCSA Eq. 6) implements the joint expectation: $\xi \sim \text{Bernoulli}(\pi)$, $\tilde{U} \sim \phi$, then $U = \xi\tilde{U} + (1-\xi)f_\eta(\tilde{U})$ where $f_\eta$ is the flow-pushforward, with $f^\star_Y$ applied to $U$. The samples $u^{(j)} \sim \nu$ used for the divergence (\ref{subsec:MC-estimation}) are exactly the $\xi = 0$ branch of this sampler, so the two estimators share a single MC budget.

\subsection{Manski (no-confounding) limit}
\label{subsec:Manski-limit}

As $\Gamma \to \infty$, the GTSM constraint becomes vacuous and $\nu$ ranges freely over distributions on $\mathcal{U}$. The maximum of the upper-bound CAPO is attained by a $\nu$ concentrated at the argmax of $f^\star_Y(x_j, \cdot, a_j)$:
\begin{align}
    \theta^+_{\text{Manski}}(x_j, a_j; \mathcal{S}_i)
    =
    \pi^{\mathcal{S}_i}(a_j \mid x_j)
    \mathcal{Q}_0(x_j, a_j; \mathcal{S}_i)
    +
    \bigl(1 - \pi^{\mathcal{S}_i}(a_j \mid x_j)\bigr)
    \sup_u f^\star_Y(x_j, u, a_j).
\end{align}
This is the standard Manski no-assumptions bound  \citep{manskiNonparametricBoundsTreatment1989}: a propensity-weighted average of the queried-arm CAPO and the worst-case counterfactual outcome. The reparameterization construction recovers it exactly in the $\Gamma \to \infty$ limit, providing a useful diagnostic and confirming that the bounds nest correctly.

\section{Implementation Details}
\label{app:implementation-details}

\subsection{PFN architecture}
\label{app:pfn-architecture}

The sensitivity-analysis foundation model is a prior-data fitted network over mixed context/query sequences. Each input sequence contains $N$ observational context rows $(X_i,A_i,Y_i,\mathrm{NaN})$ followed by $m$ query rows $(x_j,a_j,\mathrm{NaN},\Gamma_j)$, with $\Gamma$ masked on context rows and $Y$ masked on query rows. The split point is passed to the transformer as $\texttt{single\_eval\_pos}=N$. Separate input encoders for $X$, $A$, $Y$, and $\Gamma$ handle NaNs explicitly via value-mask pairs; encoded representations are concatenated along the feature axis, processed by the per-feature transformer, and pooled by averaging over features.

The decoder has two independent Gaussian-mixture heads returning posterior predictive distributions
\[
    q_\phi^\downarrow(\theta^- \mid D_N,x,a,\Gamma),
    \qquad
    q_\phi^\uparrow(\theta^+ \mid D_N,x,a,\Gamma)
\]
for the lower and upper bounds, each with $K$ Gaussian components. Architecture and training hyperparameters are listed in Table~\ref{tab:pfn-config}. The total parameter count is \texttt{2,693,279}.

\subsection{Training loss}
\label{app:training-loss}

Each training row is a tuple $(D_N,x_j,a_j,\Gamma_j,b_j,\theta_j^\star)$ with bound type $b_j\in\{\downarrow,\uparrow\}$ and label $\theta_j^\star$ from Section~\ref{sec:label-construction}. The lower and upper heads are trained by Gaussian-mixture NLL on their matching rows: $\mathcal{L}_{\mathrm{NLL}}^\downarrow$ averages over the index set $\mathcal{I}_\downarrow$ of lower-bound rows, $\mathcal{L}_{\mathrm{NLL}}^\uparrow$ over $\mathcal{I}_\uparrow$.

We additionally impose a soft monotonicity regularizer over $\Gamma$. For each query group $(D_N,x_j,a_j)$ sampled with $g$ sensitivity levels $\Gamma_{j,1}\leq\cdots\leq\Gamma_{j,g}$, let $\bar{\theta}_{j,r}^{\downarrow}$ and $\bar{\theta}_{j,r}^{\uparrow}$ denote the predictive means of the two heads at $\Gamma_{j,r}$. The penalty enforces non-increasing lower bounds and non-decreasing upper bounds via a zero-margin hinge:
\begin{align}
    \mathcal{L}_{\mathrm{mono}}
    =
    \frac{1}{|\mathcal{G}|(g-1)}
    \sum_{(D_N,x_j,a_j)\in\mathcal{G}}
    \sum_{r=1}^{g-1}
    \left[
        \mathrm{ReLU}\!\left(
            \bar{\theta}_{j,r+1}^{\downarrow}-\bar{\theta}_{j,r}^{\downarrow}
        \right)
        +
        \mathrm{ReLU}\!\left(
            \bar{\theta}_{j,r}^{\uparrow}-\bar{\theta}_{j,r+1}^{\uparrow}
        \right)
    \right].
\end{align}
The total loss is $\mathcal{L} = \mathcal{L}_{\mathrm{NLL}}^\downarrow + \mathcal{L}_{\mathrm{NLL}}^\uparrow + \beta_{\mathrm{mono}}\,\mathcal{L}_{\mathrm{mono}}$ with $\beta_{\mathrm{mono}}=1$ (in practice the penalty is active only at the start of training and contributes negligibly thereafter).

Optimizer, schedule, and batch composition (DGPs per step $B$, query groups per DGP $m'$, and $\Gamma$-points per group $g$) are listed in Table~\ref{tab:pfn-config}. Unspecified arguments use code defaults.

\begin{table}[h]
\centering
\caption{PFN architecture and training hyperparameters.}
\label{tab:pfn-config}
\small
\begin{tabular}{ll}
\toprule
\textbf{Architecture} & \\
Embedding dimension & $128$ \\
Attention heads & $4$ \\
Feedforward dimension & $512$ \\
Transformer layers & $10$ \\
Mixture components per head ($K$) & $5$ \\
\midrule
\textbf{Training} & \\
Optimizer & Adam \\
Learning rate & $10^{-3}$ \\
Weight decay & $10^{-5}$ \\
Gradient clipping (norm) & $1.0$ \\
LR schedule & ReduceLROnPlateau (factor $0.5$, patience $5$) \\
Max epochs / early-stopping patience & $150$ / $50$ \\
Train/val split & $0.8/0.2$ \\
\midrule
\textbf{Batch composition} & \\
DGPs per step ($B$) & $32$ \\
Query groups per DGP ($m'$) & $64$ \\
$\Gamma$-points per group ($g$) & $10$ \\
Query rows per DGP ($m=m'g$) & $640$ \\
Sequence length ($N+m$) & $1024+640=1664$ \\
\midrule
\textbf{Loss} & \\
Monotonicity weight ($\beta_{\mathrm{mono}}$) & $1.0$ \\
\bottomrule
\end{tabular}
\end{table}

\subsection{Aggregation to APO, CATE, and ATE}
\label{app:aggregation}

The model is trained on CAPO bounds; APO, CATE, and ATE bounds follow as deterministic post-processing. Let $\theta_a^-(x,\Gamma)\leq\mathbb{E}[Y(a)\mid x]\leq\theta_a^+(x,\Gamma)$ be the CAPO bounds returned by the model, and let $\{x_i\}_{i=1}^M$ be an empirical covariate distribution (in-sample or held-out). Then
\begin{align}
    \theta_{\mathrm{APO}}^\pm(a,\Gamma) &= \tfrac{1}{M}\textstyle\sum_i \theta_a^\pm(x_i,\Gamma), \\
    \theta_{\mathrm{CATE}}^-(x,\Gamma) &= \theta_1^-(x,\Gamma) - \theta_0^+(x,\Gamma),
    \quad
    \theta_{\mathrm{CATE}}^+(x,\Gamma) = \theta_1^+(x,\Gamma) - \theta_0^-(x,\Gamma), \\
    \theta_{\mathrm{ATE}}^\pm(\Gamma) &= \tfrac{1}{M}\textstyle\sum_i \theta_{\mathrm{CATE}}^\pm(x_i,\Gamma).
\end{align}
These intervals are valid since each component CAPO is bounded prior to aggregation; they are not in general sharp, because sharp aggregation may exploit dependence between endpoint-achieving full distributions across covariates and arms. PPDs propagate by Monte Carlo sampling from the lower/upper Gaussian-mixture heads and applying the same transformations samplewise.

\subsection{Prior-dataset generation}
\label{app:prior-dataset-generation}

We generate $10{,}000$ synthetic SCMs with $N=1024$ observations, $d_x=10$ covariates, scalar treatment and outcome, and a one-dimensional latent $U$. Outputs are normalized with $\epsilon=10^{-6}$ and the outcome model is wrapped so that frontier construction operates in normalized coordinates. Two label-generation regimes were run, one per sensitivity model: an \emph{analytical MSM} run (closed-form bounds) and an \emph{optimization-based KL} run (Section~\ref{sec:label-construction}). Configurations are summarized in Table~\ref{tab:label-runs}. The training-facing interface consists of per-DGP files $\texttt{queries\_\{id\}.csv}$ and $\texttt{frontier\_points\_\{id\}.csv}$, storing $(\texttt{query\_id},x,a)$ and $(\texttt{query\_id},\texttt{bound\_type},\Gamma^\star,\theta^\star)$ respectively. 

\begin{table}[h]
\centering
\caption{Label-generation runs. Frontier rows per DGP $=$ (query covariate rows) $\times$ 2 arms $\times$ (grid size) $\times$ 2 bounds.}
\label{tab:label-runs}
\small
\begin{tabular}{lll}
\toprule
& \textbf{Analytical MSM} & \textbf{Optimization-based KL} \\
\midrule
Sensitivity model & MSM (closed form) & KL $f$-sensitivity \\
Hardware & CPU (parallelized) & $\texttt{GPU Nvidia H200}$ \\
DGP batch size $B$ & --- & $128$ \\
\midrule
Grid parameter & $\Gamma$ & $\lambda$ \\
Grid range & $[1.0,\,5.0]$ & $[0.08,\,2.0]$ \\
Grid size & $50$ & $50$ \\
Grid spacing & bounded Pareto, $\beta=0$ (log-uniform) & log-uniform \\
Per-DGP grid randomization & yes & no \\
\midrule
Query covariate rows per DGP & $2048$ & $2048$ \\
Frontier rows per DGP & $409{,}600$ & $409{,}600$ \\
\midrule
MC samples $k_{\mathrm{train}}$, $k_{\mathrm{eval}}$ & $128$ (single bank, reused) & $128$, $4096$ \\
Latent sampler & --- & Sobol (sample seed $123$) \\
Flow architecture & --- & 1D rational-quadratic spline \\
Bins / tail bound & --- & $16$ / $6.0$ \\
Min bin width / height / derivative & --- & $10^{-3}$ each \\
\midrule
Optimizer steps per $\lambda$ (base) & --- & $350$ \\
Step $\lambda$-schedule & --- & $\texttt{inverse\_sqrt\_lr}$ (mult.\ cap $2.0$) \\
LR $\lambda$-schedule & --- & $\texttt{sqrt}$, $\lambda_{\mathrm{ref}}=0.25$, min mult.\ $0.40$ \\
LR base & --- & $10^{-3}$ \\
Loss reduction & --- & $\texttt{per\_dgp\_sum}$ \\
\midrule
Wall-clock & --- & 47h 23m 52.63s \\
\bottomrule
\end{tabular}
\end{table}

\subsection{Computational complexity}
\label{app:computational-complexity}

For the optimization-based frontier construction, a single optimization step at fixed $\lambda$ evaluates the spline flow, density ratio, divergence estimate, and outcome model on $Bmk$ latent samples, giving per-step cost $O(Bmk)$ (with $B$ the DGP batch size, $m$ queries per DGP, $k$ MC samples) plus one backward pass through the spline parameters. Total cost across the prior is
\[
    O\!\left(N_{\mathrm{DGP}} \cdot |\Lambda| \cdot T \cdot m \cdot k\right),
\]
where $T$ is the average number of optimizer steps per $\lambda$ (bounded by the schedule at $700$ for the KL run after the lambda-dependent step multiplier).
This cost is paid once during prior-data generation; at test time the trained PFN replaces the per-query optimization with a single forward pass.

Warm-starting reduces effective $T$ after the first $\lambda$: solved spline parameters initialize the next $\lambda$-solve, while Adam momentum is reset across $\lambda$ changes to avoid stale state from a different scalarized objective.

\subsection{Test-time inference}
\label{app:test-time-inference}

Test-time inference is a single transformer forward pass over a length-$(N+m)$ sequence, returning the parameters of both heads' Gaussian mixtures. Posterior predictive means are the GMM means; credible intervals are GMM quantiles, estimated by Monte Carlo sampling from the predicted mixtures.
APO, CATE, and ATE add only deterministic averaging and differencing on top of the CAPO PPDs (Section~\ref{app:aggregation}). 
\subsection{Reproducibility}
\label{app:reproducibility}

Run configurations appear in Tables~\ref{tab:pfn-config} and~\ref{tab:label-runs}, all randomness is seeded. Code release: \url{https://github.com/EmilJavurek/Amortizing-Causal-Sensitivity-Analysis-via-PFNs}.


\newpage
\section{Additional Experiments}
\label{app:additional-experiments}

\subsection{Warm-start ablation.}
\label{app:warm-start-ablation}

\begin{figure}[h]
    \centering
    \includegraphics[width=\linewidth]{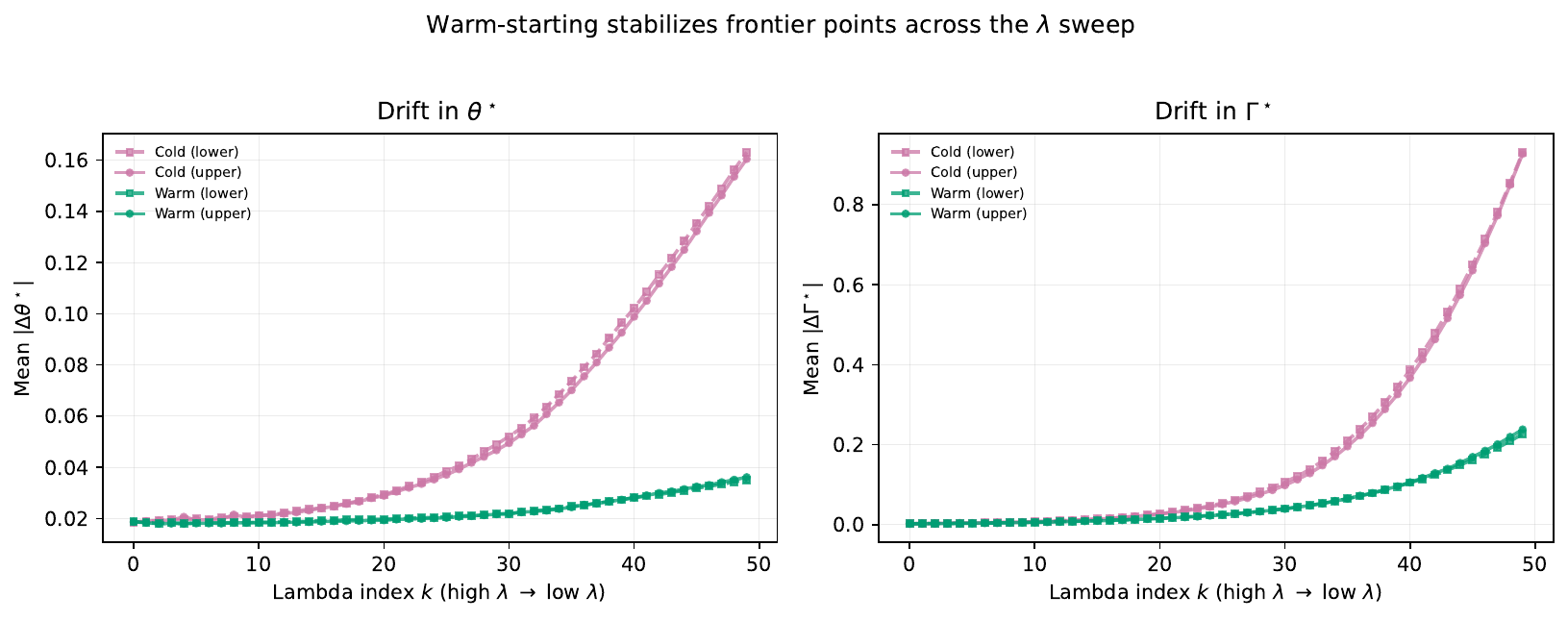}
    \caption{Warm start ablation. Drift in optimized causal query bound $\theta^star$ (left) and optimized sensitivity parameter $\Gamma^\star$ (right) along the $\lambda$-sweep ($k=0$ at $\lambda_{\mathrm{max}}=2.0$, $k=49$ at $\lambda_{\mathrm{min}}=0.08$.) measured against a high-budget reference (1000 steps). \textbf{Warm starting achieves lower regret solutions while $1.90\times$ faster.}}
    \label{fig:exp1-warmstart-solutiondrift}
\end{figure}

We use a one-dimensional rational-quadratic spline flow with 8 bins, tail bound 6.0, $k_{\mathrm{train}}=128$ Monte Carlo samples, and $k_{\mathrm{eval}}=1024$ final-evaluation samples.
Both early-stopped runs use the same optimizer settings: base maximum 350 steps, inverse-square-root $\lambda$-dependent step schedule with multiplier capped at 2.0, square-root $\lambda$-dependent learning-rate schedule with $\lambda_{\mathrm{ref}}=0.25$ and minimum multiplier 0.40, and early stopping after 100 minimum steps with checks every 25 steps, patience 3, absolute tolerance $2\times 10^{-4}$, and relative tolerance $5\times 10^{-4}$. Regret is computed relative to a warm-started fixed-budget reference with no early stopping and base maximum 1000 optimization steps. Experiment was run on an \texttt{Nvidia H200} GPU at $\sim 90\%$ utilization (no other processes running), taking in total approx. 1 hour.

\subsection{MSM foundation model}
\label{app:msm-foundation-model}

\subsubsection{Training diagnostics:}

\paragraph{Loss curves}

We monitor the total negative log-likelihood on the training and validation splits throughout optimization. The curves provide a basic diagnostic for convergence and potential overfitting. The selected checkpoint corresponds to the epoch with the lowest validation loss.

\begin{figure}[h]
    \centering
    \includegraphics[width=0.8\linewidth]{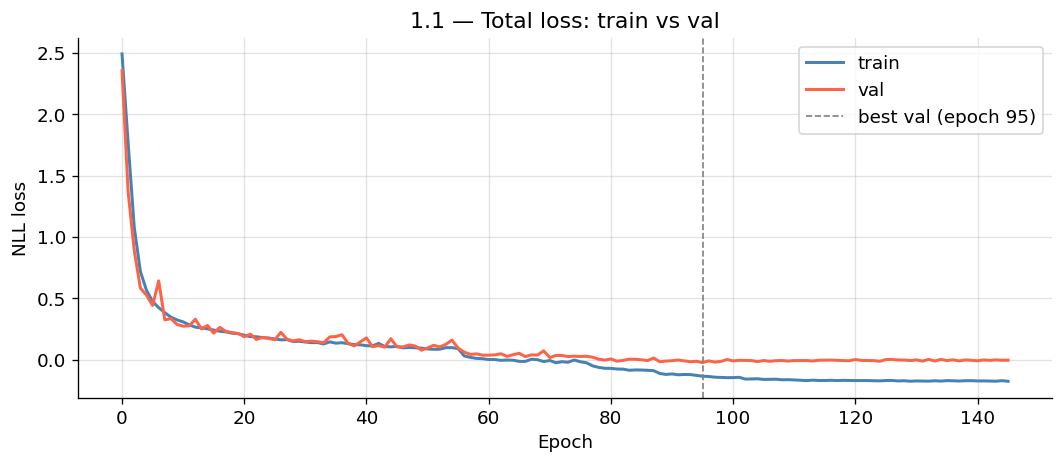}
    \caption{Training and validation negative log-likelihood decrease over epochs for the MSM foundation model.}
    \label{fig:appx-msm-total-loss}
\end{figure}

\paragraph{Performance}

We evaluate predictive performance using calibration and point-error diagnostics for the posterior predictive distributions of both bound heads. Coverage is reported for the central posterior predictive intervals and compared against the nominal levels. We also track the posterior predictive interval width to assess uncertainty contraction during training. Bias and RMSE summarize the accuracy of the predictive means.

\begin{figure}[h]
    \centering
    \includegraphics[width=0.8\linewidth]{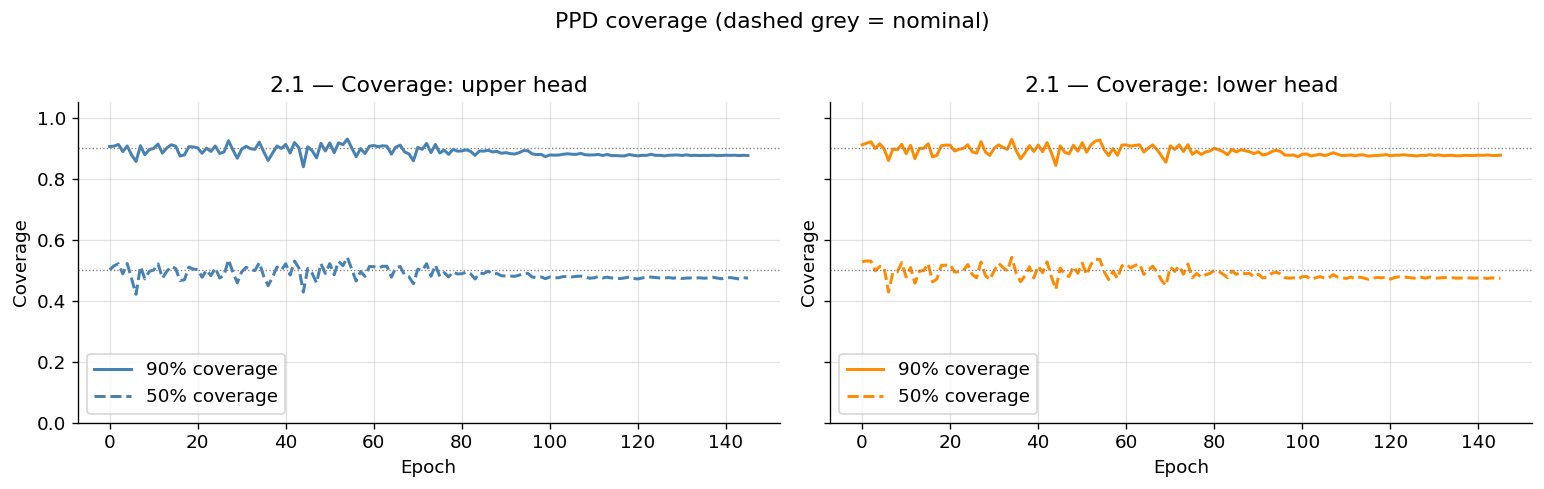}
    \includegraphics[width=0.8\linewidth]{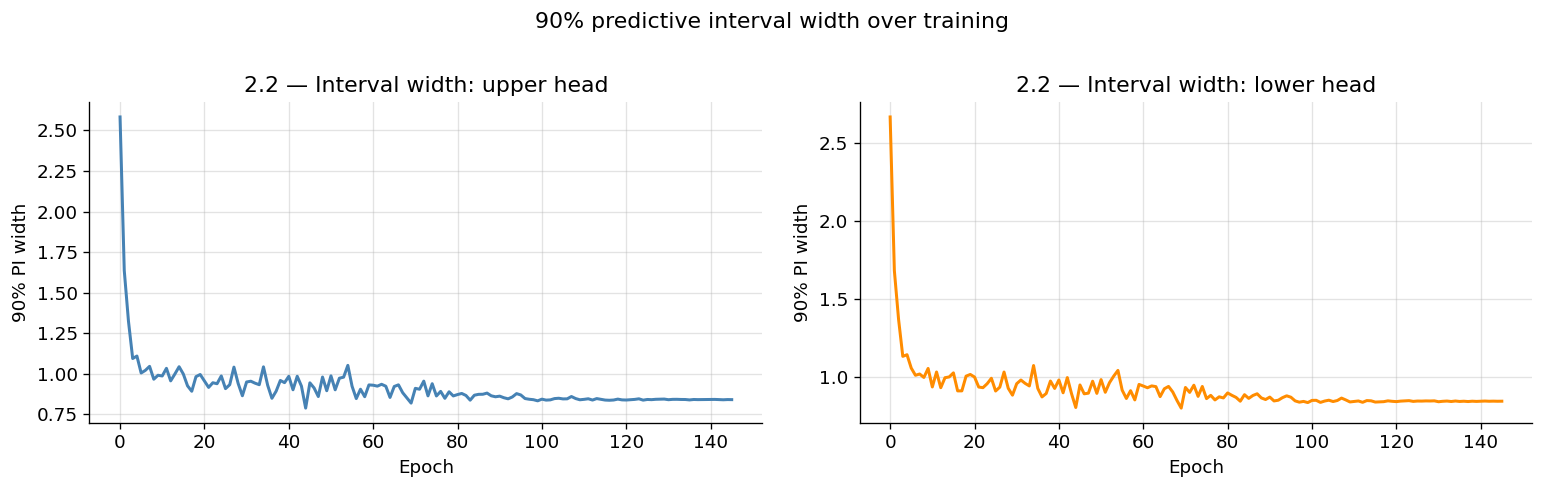}
    \includegraphics[width=0.8\linewidth]{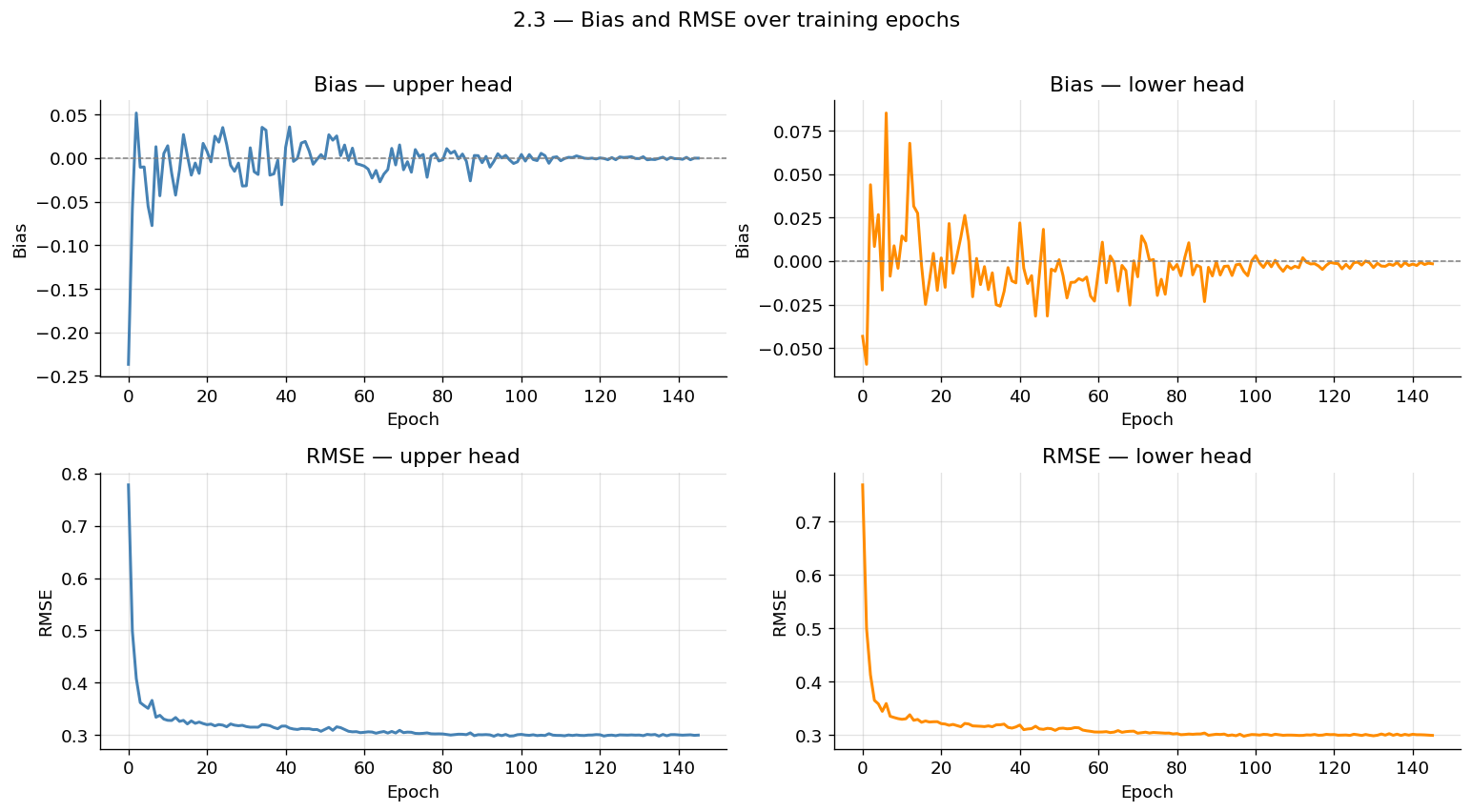}
    \caption{(Top) Posterior predictive coverage remains close to the nominal 90\% and 50\% levels for both bound heads. (Middle) The 90\% posterior predictive interval width contracts during training for both bound heads. (Bottom) Bias and RMSE decline during training for both bound heads.}
    \label{fig:appx-msm-coverage}
\end{figure}

\paragraph{Monotonicity regularization}

The MSM bounds should widen monotonically as the sensitivity level increases. We therefore track both the frequency and size of monotonicity violations during training. These diagnostics show whether the soft regularization term is active and whether violations persist after convergence. The penalty is compared against the total training loss to assess its relative contribution to optimization.

\begin{figure}[h]
    \centering
    \includegraphics[width=0.8\linewidth]{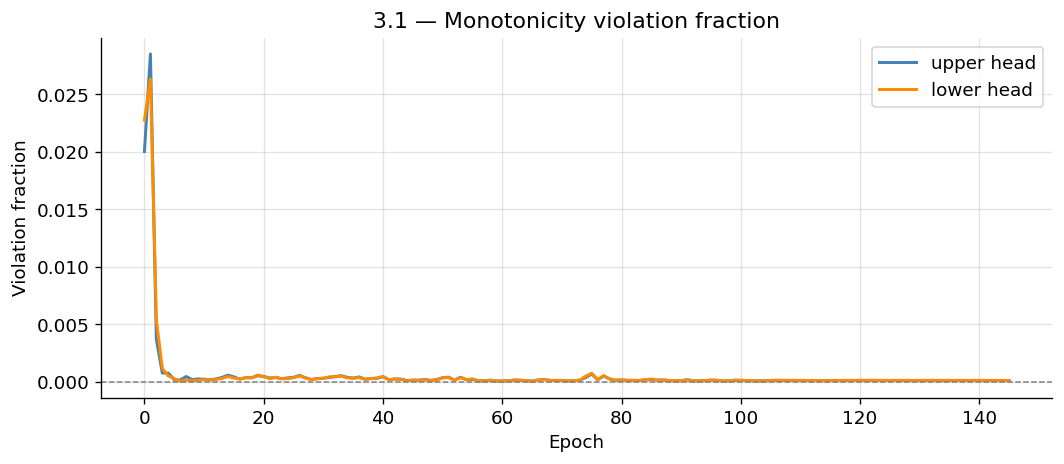}
    \includegraphics[width=0.8\linewidth]{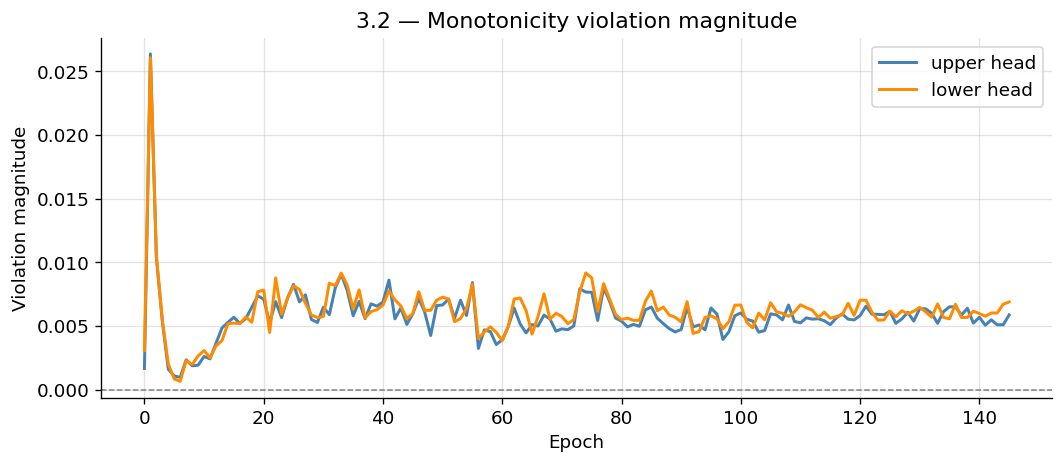}
    \includegraphics[width=0.8\linewidth]{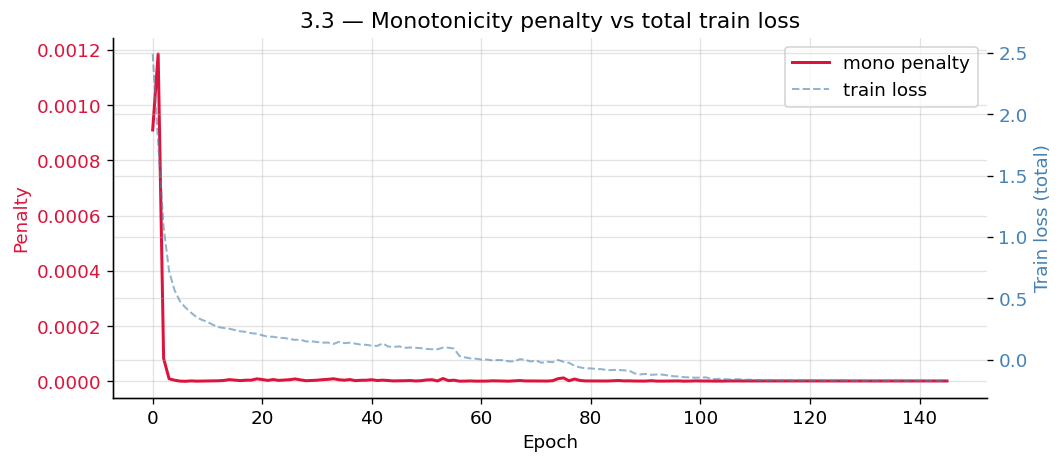}
    \caption{(Top) The fraction of monotonicity violations rapidly approaches zero for both bound heads. (Middle) The average monotonicity violation magnitude remains small after the initial training phase. (Bottom) The monotonicity penalty becomes negligible as the total training loss stabilizes.}
    \label{fig:appx-msm-violation-fraction}
\end{figure}

\section{Limitations and Broader Impact}
\label{app:limitations-impact}

\subsection{Limitations}
\label{app:limitations}

Our approach is limited by the scope of its prior, theory, and experiments. As with all prior-data fitted networks, performance depends on whether the synthetic SCM prior covers the data-generating mechanisms encountered at test time; substantial prior mismatch can lead to miscalibrated bounds. The theoretical construction targets generalized treatment sensitivity models and relies on convexity of the sensitivity divergence and linearity of the causal query, so it does not automatically extend to arbitrary sensitivity models or nonlinear estimands, even if those are uncommon. Empirically, we train and evaluate the foundation model primarily for the marginal sensitivity model, with an additional KL frontier-construction ablation, rather than exhaustively validating all GTSMs. For models without closed-form bounds, label generation remains an optimization-based approximation subject to finite compute, Monte Carlo noise, local optima, and normalizing-flow capacity. Finally, the current implementation focuses on binary treatments and scalar outcomes; broader treatment spaces and multivariate outcomes may require substantially more expensive label generation.

\subsection{Broader Impact}
\label{app:broader-impact}

This work aims to make causal sensitivity analysis cheaper and more systematic by amortizing bound computation across datasets, queries, treatment arms, and sensitivity levels. A positive impact is that researchers may be more likely to report sensitivity bounds rather than overconfident point estimates in observational studies, especially in domains such as medicine, public policy, and the social sciences. The main risks are misuse and overinterpretation: the bounds are only meaningful relative to the chosen sensitivity model, sensitivity level, and training prior, and should not be treated as a certificate of causal validity. Because amortization makes large-scale causal screening easier, users could also selectively report favorable sensitivity levels or subgroups. Responsible use therefore requires reporting full sensitivity curves, documenting assumptions, using domain expertise when choosing sensitivity ranges, and applying additional scrutiny in high-stakes settings.



\end{document}